\newcommand{\tmun}{\mathcal{T}^N_\mu}
\newcommand{\tql}{\mathcal{T}^{QL}}
\newcommand{\qmun}{Q^N_\mu}
\newcommand{\qmustar}{Q^*_\mu}
\newcommand{\pimustar}{\pi^*_\mu}
\newcommand{\ain}{\{a_i\}^N}
\newcommand{\norm}[1]{\left\lVert#1\right\rVert}
\newcommand{\linf}[1]{\norm{#1}_\infty}
\newcommand{\maxsa}[1]{\max_{s,a}\left\lvert#1\right\rvert}
\DeclareMathOperator*{\argmax}{arg\,max}
\newcommand{\muas}{\mu(a|s)}
\newcommand{\emaqdelta}{\Delta}
\newtheorem{theorem}{Theorem}[section]
\newtheorem{corollary}{Corollary}[section]
\icmltitlerunning{EMaQ: Expected-Max Q-Learning Operator for Simple Yet Effective Offline and Online RL}
\begin{document}

\twocolumn[
\icmltitle{EMaQ: Expected-Max Q-Learning Operator\\for Simple Yet Effective Offline and Online RL}



\icmlsetsymbol{equal}{*}

\begin{icmlauthorlist}
\icmlauthor{{\small \color{gray} Seyed} Kamyar {\small \color{gray} Seyed} Ghasemipour}{equal,to,ve}
\icmlauthor{Dale Schuurmans}{goobrain}
\icmlauthor{Shixiang Shane Gu}{goorobot}
\end{icmlauthorlist}

\icmlaffiliation{to}{Department of Computer Science, {\color[RGB]{6,41,88}University of Toronto}, Toronto, Canada}
\icmlaffiliation{ve}{{\color[RGB]{236, 0, 140}Vector} {\color[RGB]{0,32,96}Institute}, Toronto, Canada}
\icmlaffiliation{goobrain}{{\color[RGB]{66,133,244}G}{\color[RGB]{234,67,53}o}{\color[RGB]{250,187,5}o}{\color[RGB]{66,133,244}g}{\color[RGB]{52,168,82}l}{\color[RGB]{234,67,53}e} Research, Brain Team, Mountain View, CA, USA}
\icmlaffiliation{goorobot}{{\color[RGB]{66,133,244}G}{\color[RGB]{234,67,53}o}{\color[RGB]{250,187,5}o}{\color[RGB]{66,133,244}g}{\color[RGB]{52,168,82}l}{\color[RGB]{234,67,53}e} Research, Robotics Team, Mountain View, CA, USA}

\icmlcorrespondingauthor{Seyed Kamyar Seyed Ghasemipour}{kamyar@cs.toronto.edu}

\icmlkeywords{Machine Learning, ICML, Reinforcement Learning, Offline RL, Batch RL}

\vskip 0.3in
]



\printAffiliationsAndNotice{*Author goes by Kamyar. Work done while author was an intern and student research collaborator at {\color[RGB]{66,133,244}G}{\color[RGB]{234,67,53}o}{\color[RGB]{250,187,5}o}{\color[RGB]{66,133,244}g}{\color[RGB]{52,168,82}l}{\color[RGB]{234,67,53}e}.}  

\begin{abstract}
    Off-policy reinforcement learning (RL) holds the promise of sample-efficient learning of decision-making policies by leveraging past experience. However, in the offline RL setting -- where a fixed collection of interactions are provided and no further interactions are allowed -- it has been shown that standard off-policy RL methods can significantly underperform.
    Recently proposed methods
    often aim to address this shortcoming by constraining learned policies to remain close to the given dataset of interactions.
    In this work, we closely investigate an important simplification of BCQ~\citep{fujimoto2018off} -- a prior approach for offline RL -- which removes a heuristic design choice and naturally restrict extracted policies to remain \emph{exactly} within the support of a given behavior policy.
    Importantly, in contrast to their original theoretical considerations, we derive this simplified algorithm through the introduction of a novel backup operator, Expected-Max Q-Learning (EMaQ), which is more closely related to the resulting practical algorithm.
    Specifically, in addition to the distribution support, EMaQ explicitly considers the number of samples and the proposal distribution, allowing us to derive new sub-optimality bounds which can serve as a novel measure of complexity for offline RL problems.
    In the offline RL setting -- the main focus of this work -- EMaQ matches and outperforms prior state-of-the-art in the D4RL benchmarks~\citep{fu2020d4rl}.
    In the online RL setting, we demonstrate that EMaQ is competitive with Soft Actor Critic (SAC).
    The key contributions of our empirical findings are demonstrating the importance of careful generative model design for estimating behavior policies, and an intuitive notion of complexity for offline RL problems.
    With its simple interpretation and fewer moving parts, such as no explicit function approximator representing the policy, EMaQ serves as a strong yet easy to implement baseline for future work.
\end{abstract}

\section{Introduction}

    Leveraging past interactions in order to improve a decision-making process is the hallmark goal of off-policy reinforcement learning (RL)~\citep{precup2001off,degris2012off}.
    Effectively learning from past experiences can significantly reduce the amount of online interaction required to learn a good policy, and is a particularly crucial ingredient in settings where interactions are costly or safety is of great importance, such as robotics~\cite{gu2017deep,kalashnikov2018qtopt}, health~\cite{murphy2001marginal}, dialog agents~\citep{jaques2019way}, and education~\cite{mandel2014offline}.
    In recent years, with neural networks taking a more central role in the RL literature, there have been significant advances in developing off-policy RL algorithms for the function approximator setting, where policies and value functions are represented by neural networks~\citep{mnih2015human,lillicrap2015continuous,gu2016continuous,gu2016q,haarnoja2018soft,fujimoto2018addressing}.
    Such algorithms, while off-policy in nature, are typically trained in an online setting where algorithm updates are interleaved with additional online interactions. 
    However, in purely offline RL settings, where a dataset of interactions are provided ahead of time and no additional interactions are allowed, the performance of these algorithms degrades drastically~\citep{fujimoto2018off,jaques2019way}. 

    A number of recent methods have been developed to address this shortcoming of off-policy RL algorithms. A particular class of algorithms for offline RL that have enjoyed recent success are those based on dynamic programming and value estimation~\citep{fujimoto2018off,jaques2019way,kumar2019stabilizing,wu2019behavior,levine2020offline}. Most proposed algorithms are designed with a key intuition that it is desirable to
    prevent policies from deviating too much from the provided collection of interactions. By moving far from the actions taken in the offline data, any subsequently learned policies or value functions may not generalize well and lead to the belief that certain actions will lead to better outcomes than they actually would. Furthermore, due to the dynamics of the MDP, taking out-of-distribution actions may lead to states not covered in the offline data, creating a snowball effect~\citep{ross2011reduction}. In order to prevent learned policies from straying from the offline data, various methods have been introduced for regularizing the policy towards a base behavior policy (e.g. through a divergence penalty~\citep{jaques2019way,wu2019behavior,kumar2019stabilizing} or clipping actions~\citep{fujimoto2018off}).
    
    Taking the above intuitions into consideration, in this work we investigate a simplifcation of the BCQ algorithm~\citep{fujimoto2018off} (a notable prior work in offline RL), which removes a heuristic design choice and has the property that extracted policies remain exactly within the support of a given behavior policy.
    In contrast to the theoretical considerations in the original work, we derive this simplified algorithm in a theoretical setup that more closely reflects the resulting algorithm. We introduce the Expected-Max Q-Learning (EMaQ) operator, which interpolates between the standard Q-function evaluation and Q-learning backup operators. The EMaQ operator makes explicit the relation between the proposal distribution and number of samples used, and leads to sub-optimality bounds which introduce a novel notion of complexity for offline RL problems.
    In its practical implementation for the continuous control and function approximator setting, EMaQ has only two standard components (an estimate of the base behavior policy, and Q functions) and does not explicitly represent a policy, requiring fitting one less function approximator than prior approaches \citep{fujimoto2018off,kumar2019stabilizing,wu2019behavior}.
    
    In online RL, EMaQ is competitive with Soft Actor Critic (SAC)~\citep{haarnoja2018soft} and surpasses SAC in the deployment-efficient setting~\citep{matsushima2020deployment}. In the offline RL setting -- the main focus of this work -- EMaQ matches and outperforms prior state-of-the-art in the D4RL~\citep{fu2020d4rl} benchmark tasks.
    Through our explorations with EMaQ we make two intriguing findings.
    First, due to the strong dependence of EMaQ on the quality of behavior policy used, our results demonstrate the significant impact of careful considerations in modeling the behavior policy that generate the offline interaction datasets.
    Second, relating to the introduced notion of complexity, in a diverse array of benchmark settings considered in this work we observe that surprisingly little modification to a base behavior policy is necessary to obtain a performant policy.
    The simplicity, intuitive interpretation, and strong empirical performance of EMaQ make it a great test-bed for further examination and theoretical analyses, and an easy to implement yet strong baseline for future work in offline RL.

\section{Background}
    \label{sec:background}
    Throughout this work, we represent Markov Decision Process (MDP) as $M = \langle \mathcal{S}, \mathcal{A}, r, \mathcal{P}, \gamma \rangle$, with state space $\mathcal{S}$, action space $\mathcal{A}$, reward function $r: \mathcal{S} \!\times\!\mathcal{A}\! \rightarrow\! \mathds{R}$, transition dynamics $\mathcal{P}$, and discount 
    $\gamma$. In offline RL, we assume access to a dataset of interactions with the MDP, which we will represent as collection of tuples $D = \{(s,a,s',r,t)\}^N$, where $t$ is an indicator variable that is set to True when $s'$ is a terminal state. We will use $\mu$ to represent the behavior policy used to collect $D$, and depending on the context, we will overload this notation and use $\mu$ to represent an estimate of the true behavior policy. For a given policy $\pi$, we will use the notation $d^\pi(s), d^\pi(s,a)$ to represent the state-visitation and state-action visitation distributions respectively.
    
        
        As alluded to above, a significant challenge of
        offline RL methods is the problem of distribution shift. At training-time, there is no distribution shift in states as a fixed dataset $D$ is used for training, and the policy and value functions are never evaluated on states outside of $d^\mu(s)$. However, a very significant challenge is the problem of \textbf{distribution shift in actions}. Consider the Bellman backup for obtaining the Q-function of a given policy $\pi$,
        \begin{equation}
            \mathcal{T}_\pi Q(s,a) := r(s,a) + \gamma \cdot \mathds{E}_{s'}\mathds{E}_{a' \sim \pi(a'|s')} \Big[Q(s',a')\Big]
            \label{eq:q_eval_backup}
        \end{equation}
        The target Q-values on the right hand side depend on action samples $a' \sim \pi(a'|s')$. If the sampled actions are outside the distribution of actions observed in $D$, the estimated Q-values can be erroneous leading to incorrect target values. The effects of action distribution shift are further exacerbated in actor-critic algorithms; out of distribution (OOD) actions may incorrectly be assigned high values, in which case the policy will be updated to further sample OOD actions, leading to a hazardous loop.
        
        An important approach -- with particular recent interest -- to mitigate the effects of both kinds of distributional shift is to devise methods for constraining learned policies to remain close to the behavior policy $\mu$: $d^\pi(s,a) \approx d^\mu(s,a)$. Below, we set the stage by reviewing a closely related prior work in offline RL.
        
        \paragraph{Batch Constrained Q-Learning (BCQ)} \label{bcq_eqs} In BCQ \citep{fujimoto2018off} the aim is to constrain a Q-Learning based algorithm such that it will be effective in the offline RL continuous control setting with function approximators. To do so, the trained policy is parameterized as:
        \begin{align}
            &\pi_\theta(a|s) = \argmax_{a_i + \xi_\theta(s,a_i)} Q_\psi(s,a_i + \xi_\theta(s,a_i)) \\ &\qquad \mbox{for} \qquad a_i \sim \mu(a|s), i=1, ..., N \nonumber\\
            &y(s,a,s',r,t) = \left(r + (1-t)\cdot\gamma \max_{a'_i} Q_{\psi'}(s',a'_i)\right) \\ &\qquad \mbox{for} \qquad a'_i \sim \pi_\theta(a'|s'), i=1, ..., N \nonumber\\
            &\mathcal{L}_Q = \left(y(s,a,s',r,t) - Q_\psi(s,a)\right)^2 \label{eq:Q_loss}
        \end{align}
        where $y(s,a)$ are target Q-values, $Q_\psi$ is learned with the objective in equation \ref{eq:Q_loss}, $\muas$ is an estimate of the base behavior policy (a generative model trained using the dataset $D$), and $\xi_\theta$ is an action perturbation model trained to modify actions towards more optimal ones. \emph{Crucially}, each component of the output of $\xi_\theta$ is bounded to the range $[-\Phi,\Phi]$. \textbf{\emph{The key intuition}} is that because $a_i$ are sampled from an estimate of the behavior policy, they should hopefully be within the distribution observed in $D$. Thus, since the perturbation model is constrained by the hyperparameter $\Phi$, the perturbed actions should not be too far from actions in the dataset. This should mitigate errors in value estimates, which should in turn lead to better updates for the perturbation model.
\section{Expected-Max Q-Learning}
    
    We make the observation that, in the BCQ algorithm, if we could obtain a good estimate $\muas$ and sufficiently increased the number of samples $N$, there would be no need for the perturbation network $\xi_\theta$. This simplification would remove one additional function approximator and the associated hyperparameter $\Phi$. This is the driving intuition of our work, which we frame theoretically in a manner that encapsulates the key components: the behavior policy $\mu$ and number of samples $N$.
    Below, we introduce the Expected-Max Q operator,
    illustrate its key properties for tabular MDPs, and obtain sub-optimality bounds which can serve as a novel measure of complexity of an offline RL problem for future theoretical work. 
    We then provide an extension to the offline RL setting with function approximators,
    and then discuss the generative model used to
    approximate the behavior policy.

\subsection{Expected-Max Q Operator}
    \label{sec:emaq_op}
    Let $\muas$ be an arbitrary behavior policy, and let $\ain \sim \mu(a|s)$ denote sampling $N$ iid actions from $\mu(a|s)$. Let $Q: \mathcal{S} \times \mathcal{A} \rightarrow \mathds{R}$ be an arbitrary function. For a given choice of $N$, we define the Expected-Max Q-Learning operator (EMaQ) $\tmun Q$ as follows:
        \begin{align}
            & \mbox{EMaQ with $\mu$, $N$} \label{eq:backup}\\
            &\tmun Q(s,a) := r(s,a) + \gamma \cdot \mathds{E}_{s'}\mathds{E}_{\{a'_i\}^N \sim \mu(\cdot|s')}\left[\max_{a'\in\{a'_i\}^N} Q(s',a')\right]
             \nonumber\\
            &\mbox{Q-Evaluation for $\mu$} \label{eq:on_policy_backup}\\
            &\mathcal{T}_\mu Q(s,a) := r(s,a) + \gamma \cdot \mathds{E}_{s'}\mathds{E}_{a' \sim \mu(\cdot|s')} \Big[Q(s',a')\Big]  \nonumber\\
            &\mbox{Q-Learning} \label{eq:q_learning_backup}\\
            &\mathcal{T}^*Q(s,a) := r(s,a) + \gamma \cdot \mathds{E}_{s'}\Big[\max_{a'} Q(s',a')\Big]  \nonumber
        \end{align}
    This operator provides a natural interpolant between
    the on-policy backup for $\mu$ (Eq. \ref{eq:on_policy_backup}) when $N=1$,
    and the Q-learning backup (Eq. \ref{eq:q_learning_backup})
    as $N\rightarrow\infty$ (if $\mu(a|s)$ has full support over $\cal A$).
    We formalize these observations more precisely below when we
    articulate the key properties 
    in the tabular MDP setting.
    We discuss how this relates to existing modified backup operators in the related work.


\subsection{Dynamic Programming Properties in the Tabular MDP Setting}

To understand any novel backup operator
it is useful to first characterize its key dynamic programming
properties  in the tabular MDP setting.
First, we establish that 
EMaQ retains essential contraction and
fixed-point existence properties,
regardless of the choice of $N\in\mathds{N}$.
In the interest of space, all missing proofs can be found in Appendix \ref{ap:proofs}.

    \begin{theorem}
        \label{theorem:contraction}
        In the tabular setting, for any $N \in \mathds{N}$, $\tmun$ is a contraction operator in the $\mathcal{L}_\infty$ norm. Hence, with repeated applications of the $\tmun$, any initial $Q$ function converges to a unique fixed point.
    \end{theorem}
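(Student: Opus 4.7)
The plan is to directly verify the contraction property: for any two bounded $Q_1, Q_2 : \mathcal{S}\times\mathcal{A} \to \mathds{R}$, I aim to show
\[
\linf{\tmun Q_1 - \tmun Q_2} \;\leq\; \gamma\, \linf{Q_1 - Q_2}.
\]
Since $\gamma < 1$, this makes $\tmun$ a $\gamma$-contraction on the Banach space of bounded real-valued functions on $\mathcal{S}\times\mathcal{A}$ under the sup-norm, and the unique-fixed-point conclusion then follows from the Banach fixed point theorem.

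First I would evaluate the difference $\tmun Q_1(s,a) - \tmun Q_2(s,a)$ at an arbitrary $(s,a)$. The reward terms cancel, leaving only the prefactor $\gamma$ and the nested expectations over $s'$ and $\ain \sim \mu(\cdot|s')$. Pulling the absolute value inside both expectations via the triangle inequality (equivalently, Jensen applied to $|\cdot|$) yields
\[
|\tmun Q_1(s,a) - \tmun Q_2(s,a)| \;\leq\; \gamma \cdot \mathds{E}_{s'}\mathds{E}_{\ain \sim \mu(\cdot|s')} \left| \max_{a'\in\ain} Q_1(s',a') - \max_{a'\in\ain} Q_2(s',a') \right|.
\]

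The heart of the argument is the elementary inequality $|\max_i x_i - \max_i y_i| \leq \max_i |x_i - y_i|$, applied pointwise to each realization of the sampled set $\ain$. This bounds the integrand by $\max_{a'\in\ain}|Q_1(s',a') - Q_2(s',a')|$, which is in turn at most $\linf{Q_1 - Q_2}$ uniformly in $s'$, in the sampled actions, and in $N$. Because the resulting bound no longer depends on the integration variables, both expectations collapse and we obtain $|\tmun Q_1(s,a) - \tmun Q_2(s,a)| \leq \gamma \linf{Q_1 - Q_2}$; taking the supremum over $(s,a)$ finishes the contraction claim.

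I do not anticipate any real obstacle: the choice of $N$ enters only through the cardinality of the sampled set inside the $\max$, and the $|\max - \max| \leq \max|\cdot - \cdot|$ bound holds for any finite set, so the contraction constant is $\gamma$ independent of $N$. The only minor care needed is to note that in the tabular setting the space of bounded real-valued functions on the finite domain $\mathcal{S}\times\mathcal{A}$ under $\linf{\cdot}$ is complete, so Banach's theorem applies and delivers both existence and uniqueness of the fixed point along with convergence from any initialization.
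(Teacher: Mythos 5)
Your proposal is correct and follows essentially the same route as the paper's proof: cancel the reward terms, pull the absolute value inside the expectations, and apply the elementary bound $\lvert\max_i x_i - \max_i y_i\rvert \leq \max_i\lvert x_i - y_i\rvert \leq \linf{Q_1 - Q_2}$ pointwise in the sampled action set, yielding a $\gamma$-contraction independent of $N$. Your explicit appeal to completeness and the Banach fixed point theorem is a minor addition the paper leaves implicit.
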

    
    \begin{theorem}
    \label{theorem:fixedpoint}
        Let $\qmun$ denote the unique fixed point achieved in Theorem \ref{theorem:contraction}, and let $\pi^N_\mu(a|s)$ denote the policy that samples $N$ actions from $\muas$, $\ain$, and chooses the action with the maximum $\qmun$. Then $\qmun$ is the Q-value function corresponding to $\pi^N_\mu(a|s)$.
    \end{theorem}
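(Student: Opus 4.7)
The plan is to show that the fixed point $\qmun$ satisfies the standard Bellman policy-evaluation equation for the policy $\pi^N_\mu$, after which the result follows immediately from the uniqueness of the evaluation fixed point in the tabular setting. The argument is essentially a self-consistency rewrite of the EMaQ backup in terms of an on-policy backup.

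First, I would unfold the definition of $\pi^N_\mu$. By construction, drawing $a' \sim \pi^N_\mu(\cdot|s')$ is equivalent to drawing $\ain \sim \mu(\cdot|s')$ and then selecting $a' = \argmax_{a_i \in \ain}\qmun(s',a_i)$ (with any fixed tie-breaking rule, which is immaterial since tied actions share their $\qmun$ value). Consequently,
\begin{align*}
\mathds{E}_{a' \sim \pi^N_\mu(\cdot|s')}\bigl[\qmun(s',a')\bigr]
 \;=\; \mathds{E}_{\ain \sim \mu(\cdot|s')}\!\left[\max_{a' \in \ain}\qmun(s',a')\right].
\end{align*}

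Next, I would substitute this identity into the fixed-point equation $\qmun = \tmun \qmun$ guaranteed by Theorem \ref{theorem:contraction}, obtaining
\begin{align*}
\qmun(s,a) \;=\; r(s,a) + \gamma\,\mathds{E}_{s'}\mathds{E}_{a' \sim \pi^N_\mu(\cdot|s')}\bigl[\qmun(s',a')\bigr] \;=\; \bigl(\mathcal{T}_{\pi^N_\mu}\qmun\bigr)(s,a).
\end{align*}
Thus $\qmun$ is a fixed point of the standard Bellman evaluation operator $\mathcal{T}_{\pi^N_\mu}$. Since in the tabular setting $\mathcal{T}_{\pi^N_\mu}$ is a $\gamma$-contraction in the $\mathcal{L}_\infty$ norm with a unique fixed point equal to the Q-function of $\pi^N_\mu$, I conclude $\qmun = Q^{\pi^N_\mu}$.

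The only subtlety to address carefully is the apparently circular nature of the statement: $\pi^N_\mu$ is defined in terms of $\qmun$, and we are asserting $\qmun$ is the value function of that very policy. I expect this to be the main conceptual (rather than technical) obstacle for a reader, and I would handle it by emphasizing the order of quantification: fix $\qmun$ as the function produced by Theorem \ref{theorem:contraction}, then define $\pi^N_\mu$ using this fixed function, at which point the identity above becomes a straightforward rewriting of the EMaQ backup as an on-policy evaluation backup and the remainder of the argument is standard.
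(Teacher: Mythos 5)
Your proof is correct and follows essentially the same route as the paper: rewrite the EMaQ backup at the fixed point as the on-policy evaluation backup for $\pi^N_\mu$ and conclude. You are in fact slightly more careful than the paper's one-line argument, since you explicitly invoke the uniqueness of the fixed point of the standard evaluation operator $\mathcal{T}_{\pi^N_\mu}$ rather than only that of $\tmun$, which is the right way to close the loop.
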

    \begin{proof}
         \emph{(Theorem~\ref{theorem:fixedpoint})} Rearranging the terms in equation $\ref{eq:backup}$ we have,
            \begin{equation*}
                \tmun \qmun(s,a) = r(s,a) + \gamma \cdot \mathds{E}_{s'}\mathds{E}_{a' \sim \pi^N_\mu(a'|s')}[\qmun(s',a')]
            \end{equation*}
        Since by definition $\qmun$ is the unique fixed point of $\tmun$, we have our result.
    \end{proof}

     
     From these results we can then rigorously establish the interpolation
properties of the EMaQ family. 

        \begin{theorem}
            \label{theorem:limit}
            Let $\pimustar$ denote the optimal policy from the class of policies whose actions are restricted to lie within the support of the policy $\mu(a|s)$.
            Let $\qmustar$ denote the Q-value function corresponding to $\pimustar$. Furthermore, let $Q_\mu$ denote the Q-value function of the policy $\mu(a|s)$. Let $\mu^*(s) := \int_{\mbox{Support}(\pi^*_\mu(a|s))} \muas$ denote the probability of optimal actions under $\muas$. Under the assumption that $\inf_{s} \mu^*(s) > 0$ and $r(s,a)$ is bounded, we have that,
                \begin{equation*}
                    Q^1_\mu = Q_\mu \qquad\qquad\mbox{ and }\qquad\qquad \lim_{N\rightarrow\infty} Q^N_\mu = \qmustar
                \end{equation*}
        \end{theorem}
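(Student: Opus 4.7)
The theorem splits into two claims. The first, $Q^1_\mu = Q_\mu$, is essentially a definitional check: with $N=1$ the ``max'' over the singleton $\{a_1\}$ with $a_1 \sim \muasp$ collapses, so $\mathcal{T}^1_\mu$ coincides with the standard Bellman evaluation operator for $\mu$, whose unique fixed point is $Q_\mu$. I would dispose of this in one line.

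For the limit $\lim_{N\to\infty}\qmun=\qmustar$, my plan is to compare $\tmun$ with the support-restricted Bellman optimality operator
\begin{equation*}
    \mathcal{T}^\infty_\mu Q(s,a) := r(s,a) + \gamma\,\mathds{E}_{s'}\!\left[\sup_{a'\in\mbox{Support}(\muasp)}\! Q(s',a')\right],
\end{equation*}
whose unique fixed point is $\qmustar$ by the definition of $\pimustar$. The key idea is first to bound $\linf{\tmun\qmustar-\qmustar}$ directly and then to transfer this bound to $\linf{\qmun-\qmustar}$ by combining the triangle inequality with the $\gamma$-contraction of $\tmun$ from Theorem~\ref{theorem:contraction}. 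For the direct bound, fix $s'$ and observe that the probability that none of the $N$ i.i.d.\ samples $\ain\sim\muasp$ lands in $\mbox{Support}(\pimustar(\cdot|s'))$ is $(1-\mu^*(s'))^N \leq (1-p^*)^N$ where $p^*:=\inf_s \mu^*(s)>0$. On the complementary event, $\max_{a'\in\ain}\qmustar(s',a')$ exactly attains the support supremum; on the failing event the shortfall is at most the range of $\qmustar$, which is finite since $r$ is bounded (so $\linf{\qmustar}\leq R_{\max}/(1-\gamma)$). Taking the inner expectation and then the supremum over $(s,a)$ yields $\linf{\tmun\qmustar-\qmustar}\leq \delta_N$ with $\delta_N := \gamma(1-p^*)^N\cdot 2R_{\max}/(1-\gamma)\to 0$. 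Writing $\qmun = \tmun\qmun$ and using contraction then gives
\begin{equation*}
    \linf{\qmun-\qmustar} \leq \linf{\tmun\qmun-\tmun\qmustar} + \linf{\tmun\qmustar-\qmustar} \leq \gamma\linf{\qmun-\qmustar} + \delta_N,
\end{equation*}
so $\linf{\qmun-\qmustar}\leq\delta_N/(1-\gamma)\to 0$.

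The main obstacle I anticipate is making precise sense of the ``sup over Support'' used in $\mathcal{T}^\infty_\mu$ — in particular, verifying that it is well-defined on bounded $Q$ and that $\qmustar$ is genuinely its fixed point, as opposed to a limit that is only approached by a sequence of suboptimal actions. The hypothesis $\inf_s \mu^*(s)>0$ is the lever that sidesteps both measurability subtleties and degenerate non-attainment cases: it forces positive $\mu$-mass on the optimal action set at every state, which simultaneously legitimizes the comparison operator and supplies the uniform exponential rate governing $\delta_N$.
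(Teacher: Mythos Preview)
Your proposal is correct and follows essentially the same route as the paper: the $N=1$ case is disposed of by recognizing $\mathcal{T}^1_\mu$ as the evaluation operator, and the limit is obtained by bounding $\linf{\tmun\qmustar-\qmustar}$ via the probability $(1-p^*)^N$ of missing the optimal-action set together with the boundedness of $\qmustar$, then converting this one-step error into a fixed-point error through the $\gamma$-contraction. The only cosmetic difference is that the paper packages the contraction step as the general suboptimality bound of Theorem~\ref{theorem:bounds} (yielding $\linf{\qmun-\qmustar}\le \frac{\gamma}{1-\gamma}\max_s\Delta(s)$) and then bounds $\Delta(s)$ exactly as you do, whereas you unroll the contraction directly; the resulting inequality and rate are identical.
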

        
        That is, Theorem \ref{theorem:limit}
        shows that, given a base behavior policy $\mu(a|s)$, the choice of $N$ makes the EMaQ operator interpolate between evaluating the Q-value of $\mu$ on the one hand, and learning the optimal Q-value function on the other
        (optimal subject to the support constraint discussed in Theorem \ref{theorem:limit}). In the special case where $\mu(a|s)$ has full support over the action space $\mathcal{A}$, EMaQ interpolates between the standard Q-Evaluation and Q-Learning operators in reinforcement learning. 
        
        Intuitively, as we increase $N$, the fixed-points $\qmun$ should correspond to increasingly better policies $\pi^N_\mu(a|s)$. We show that this is indeed the case.
        
        \begin{theorem}
            For all $N, M \in \mathds{N}$, where $N > M$, we have that $\forall s \in \mathcal{S}, \forall a \in \textnormal{Support}(\mu(\cdot|s))$, $\qmun(s,a) \geq Q^M_\mu(s,a)$. Hence, $\pi^N_\mu(a|s)$ is at least as good of a policy as $\pi^M_\mu(a|s)$.
        \end{theorem}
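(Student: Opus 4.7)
The plan is to establish $\qmun(s,a) \geq Q^M_\mu(s,a)$ pointwise by a standard ``apply the new operator to the old fixed point and iterate'' argument, and then deduce the policy comparison from the coupling underlying the operator itself.

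First, by Theorem \ref{theorem:fixedpoint}, $Q^M_\mu$ is the unique fixed point of $\mathcal{T}^M_\mu$, so
\begin{equation*}
Q^M_\mu(s,a) = r(s,a) + \gamma\, \mathds{E}_{s'}\mathds{E}_{\{a_i\}^M \sim \mu(\cdot|s')}\!\left[\max_{a' \in \{a_i\}^M} Q^M_\mu(s',a')\right].
\end{equation*}
The central observation is a coupling: sample $\{a_i\}^N$ iid from $\mu(\cdot|s')$ and regard its first $M$ entries as an $M$-sample. Then for any function $f$, $\max_{i\leq N} f(a_i) \geq \max_{i \leq M} f(a_i)$ pointwise in the randomness, so
\begin{equation*}
\mathds{E}_{\{a_i\}^N}\!\left[\max_{a' \in \{a_i\}^N} Q^M_\mu(s',a')\right] \;\geq\; \mathds{E}_{\{a_i\}^M}\!\left[\max_{a' \in \{a_i\}^M} Q^M_\mu(s',a')\right].
\end{equation*}
Comparing with the displayed fixed-point equation yields the one-step improvement $\tmun Q^M_\mu(s,a) \geq Q^M_\mu(s,a)$ for every $(s,a)$.

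Next I would use the fact that $\tmun$ is \emph{monotone}: if $Q_1 \geq Q_2$ pointwise, then $\tmun Q_1 \geq \tmun Q_2$ pointwise, because the inner $\max$ and the expectations both preserve the ordering. Applying $\tmun$ repeatedly to the inequality $\tmun Q^M_\mu \geq Q^M_\mu$, the sequence $(\tmun)^k Q^M_\mu$ is pointwise nondecreasing in $k$. By Theorem \ref{theorem:contraction} it converges (in $\mathcal{L}_\infty$, and hence pointwise) to $\qmun$, so $\qmun \geq Q^M_\mu$ everywhere; in particular this holds on $\textnormal{Support}(\mu(\cdot|s))$.

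Finally, for the policy statement, Theorem \ref{theorem:fixedpoint} gives $V^{\pi^N_\mu}(s) = \mathds{E}_{\{a_i\}^N \sim \mu(\cdot|s)}[\max_i \qmun(s,a_i)]$ and similarly for $M$. Combining the pointwise bound $\qmun \geq Q^M_\mu$ with the same coupling used above,
\begin{equation*}
V^{\pi^N_\mu}(s) \;\geq\; \mathds{E}_{\{a_i\}^N}\!\left[\max_i Q^M_\mu(s,a_i)\right] \;\geq\; \mathds{E}_{\{a_i\}^M}\!\left[\max_i Q^M_\mu(s,a_i)\right] \;=\; V^{\pi^M_\mu}(s),
\end{equation*}
so $\pi^N_\mu$ is at least as good as $\pi^M_\mu$. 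There is no real conceptual obstacle here; the one delicate bookkeeping point is being explicit that the ``max over more iid samples'' dominance is almost-sure under the coupling, so that it passes through the outer expectation over $s'$ and the iteration of the monotone contraction.
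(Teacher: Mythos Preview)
Your argument is correct and is essentially the paper's own proof: apply the larger-$N$ operator to the smaller-$N$ fixed point, obtain one-step improvement via the ``max over more iid samples dominates'' coupling, then iterate using monotonicity and pass to the limit via the contraction. Your version is marginally more direct (you compare $N$ to $M$ in one shot rather than reducing to $N{+}1$ vs.\ $N$ and inducting) and you spell out the value-function comparison for the policy claim, which the paper leaves implicit.
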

    
        It is also valuable to obtain a sense of how suboptimal $\pi^N_\mu(a|s)$ may be with respect to the optimal policy supported by the policy $\mu(a|s)$.
        
        \begin{theorem}
            \label{theorem:bounds}
            For $s \in \mathcal{S}$ let,
            \begin{align*}
                \emaqdelta(s,N) = &\max_{a \in \textnormal{Support}(\mu(\cdot|s))} \qmustar(s,a) \\ &- \mathds{E}_{\ain \sim \mu(\cdot|s)}[\max_{b\in \ain} \qmustar(s,b)]
            \end{align*}
            The suboptimality of $\qmun$ can be upperbounded as follows,
            \begin{align}
                \linf{\qmun - \qmustar} &\leq \frac{\gamma}{1 - \gamma} \max_{s,a}
                    \mathds{E}_{s'} \Big[\emaqdelta(s',N)\Big] \\ &\leq \frac{\gamma}{1 - \gamma} \max_{s} \emaqdelta(s,N)
                \label{eq:bound}
            \end{align}
            The same also holds when $\qmustar$ is replaced with $\qmun$ in the definition of $\emaqdelta$.
        \end{theorem}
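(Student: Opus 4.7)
The plan is to use the standard two-operator fixed-point comparison trick. Let $\mathcal{T}^*_\mu$ denote the support-constrained Bellman optimality operator $\mathcal{T}^*_\mu Q(s,a) := r(s,a) + \gamma\,\mathds{E}_{s'}\!\left[\max_{a'\in\textnormal{Support}(\mu(\cdot|s'))} Q(s',a')\right]$. Using the characterization of $\pi^*_\mu$ as the optimal support-restricted policy, $\qmustar$ is the unique fixed point of $\mathcal{T}^*_\mu$, while $\qmun$ is the unique fixed point of $\tmun$ by Theorem~\ref{theorem:contraction}. The key decomposition is
\begin{equation*}
\qmun - \qmustar \;=\; \tmun \qmun - \mathcal{T}^*_\mu \qmustar \;=\; \bigl(\tmun \qmun - \tmun \qmustar\bigr) \;+\; \bigl(\tmun \qmustar - \mathcal{T}^*_\mu \qmustar\bigr).
\end{equation*}

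\textbf{Bounding each term.} For the first bracket, Theorem~\ref{theorem:contraction} gives $\linf{\tmun \qmun - \tmun \qmustar} \leq \gamma \linf{\qmun - \qmustar}$. For the second bracket, the $r(s,a)$ terms cancel and what remains at each $(s,a)$ is exactly
\begin{equation*}
\gamma\,\mathds{E}_{s'}\!\left[\mathds{E}_{\ain \sim \mu(\cdot|s')}\!\left[\max_{a'\in\ain} \qmustar(s',a')\right] - \max_{a'\in\textnormal{Support}(\mu(\cdot|s'))} \qmustar(s',a')\right] \;=\; -\gamma\,\mathds{E}_{s'}[\emaqdelta(s')].
\end{equation*}
Note that $\emaqdelta(s') \geq 0$ since the max of $N$ iid samples from the support is always dominated by the max over the full support, so no absolute value issues arise.

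\textbf{Assembly.} Taking $\maxsa{\cdot}$ on both sides and using the triangle inequality yields
\begin{equation*}
\linf{\qmun - \qmustar} \;\leq\; \gamma \linf{\qmun - \qmustar} \;+\; \gamma \max_{s,a} \mathds{E}_{s'}[\emaqdelta(s')],
\end{equation*}
and rearranging gives the first inequality in \eqref{eq:bound}. The second inequality is immediate because an expectation of $\emaqdelta(s')$ is upper bounded by $\max_{s} \emaqdelta(s)$. For the variant where $\qmustar$ is replaced by $\qmun$ inside $\emaqdelta$, simply swap the role of the two operators in the decomposition: write $\qmun - \qmustar = (\tmun \qmun - \mathcal{T}^*_\mu \qmun) + (\mathcal{T}^*_\mu \qmun - \mathcal{T}^*_\mu \qmustar)$, use contraction of $\mathcal{T}^*_\mu$ on the second term, and identify the first term as $-\gamma\,\mathds{E}_{s'}[\emaqdelta(s')]$ with $\qmun$ now inside.

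\textbf{Main obstacle.} There is no deep obstacle; the only care needed is bookkeeping around the support constraint---specifically, verifying that $\qmustar$ satisfies the support-constrained Bellman equation (so that $\mathcal{T}^*_\mu$ is the right comparison operator) and that $\emaqdelta(s) \geq 0$ so the sign inside the absolute value is controlled. These are both immediate consequences of the definitions given in the statement of Theorem~\ref{theorem:limit}.
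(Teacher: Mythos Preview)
Your proof is correct, but it takes a different route from the paper's. The paper argues via a telescoping/geometric-series trick: for the $\qmustar$ version it writes
\[
\linf{\qmun - \qmustar} \;\leq\; \sum_{m=0}^\infty \linf{(\tmun)^{m+1}\qmustar - (\tmun)^m \qmustar}
\;\leq\; \sum_{m=0}^\infty \gamma^m \linf{\tmun \qmustar - \qmustar}
\;=\; \frac{1}{1-\gamma}\linf{\tmun \qmustar - \qmustar},
\]
and then identifies $\linf{\tmun \qmustar - \qmustar} = \gamma \max_{s,a}\mathds{E}_{s'}[\emaqdelta(s')]$. For the $\qmun$ version it does the symmetric thing, iterating the support-constrained optimality operator starting from $\qmun$.

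Your one-step fixed-point comparison---split $\qmun - \qmustar$ into a contraction piece plus an operator-mismatch piece, triangle-inequality, and solve for $\linf{\qmun - \qmustar}$---is the other classical way to get the same $\tfrac{\gamma}{1-\gamma}$ factor. It is a bit more direct (no infinite series, no need to invoke convergence of iterates), and it makes the role of $\emaqdelta$ as the one-step residual between $\tmun$ and $\mathcal{T}^*_\mu$ more transparent. The paper's version, on the other hand, makes explicit that the bound is really controlling how far repeated application of one operator can drift from the other's fixed point. Both are standard and yield identical bounds; your handling of the $\qmun$-in-$\emaqdelta$ variant by swapping which operator supplies the contraction is exactly right and mirrors the paper's symmetric treatment.
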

        
    
    
    \subsection{A Measure of Complexity for Offline RL}
    \label{sec:complexity_measure}
        The bounds in \eqref{eq:bound} capture the main intuitions about the interplay between $\mu(a|s)$ and the choice of $N$. If for each state, $\muas$ places sufficient mass over the optimal actions, $\pi^N_\mu$ will be close to $\pimustar$.
        The two variants of $\emaqdelta(s,N)$, based on $\qmustar$ or $\qmun$, suggest an intriguing notion of difficulty for an offline RL problem. If we could estimate either of these Q-value functions, then for a desired set of states (such as initial states) we could plot $\emaqdelta(s,N)$ as decreasing function of $N$. The rate at which this function decreases could serve as an intuitive notion of difficulty for a given offline offline RL problem which consists of an MDP and a given behavior policy.
        While we leave theoretical investigations of this measure for future work, our empirical results in Section \ref{sec:experiments} demonstrate that the effective value of $N$ may be surprisingly small.
    
    \subsection{Offline RL Setting with Function Approximators}\label{offline_emaq_eqs}
        
        Typically, we are not provided with the policies that generated the provided trajectories. Hence, as a first step we fit a generative model $\muas$ to the $(s,a)$ pairs in the offline dataset, representing the mixture of policies that generated this data (details below).
        Having obtained $\muas$, we move on to the EMaQ training procedure. Similar to prior works \citep{fujimoto2018off, kumar2019stabilizing, wu2019behavior}, we train $K$ Q functions (represented by MLPs) and make use of an ensembling procedure to combat overestimation bias \citep{hasselt2010double, van2016deep, fujimoto2018addressing}. Letting $D$ represent the offline dataset, the objective for the Q functions takes the following form:
        \begin{align}
            &\mathcal{L}(\theta_i) = \mathds{E}_{(s,a,s',r,t) \sim D} \left[
                \Big(
                    Q_{\theta_i}(s,a) - y(s,a,s',r,t)
                \Big)^2
            \right]\label{eq:emaq_Q_loss}\\
            &y(s,a,s',r,t) = \left(r + (1-t)\cdot\gamma \max_{a'_i} Q'_{ens}(s',a'_i)\right) \label{eq:emaq_Q_targets}\\  &\quad \mbox{for} \quad a'_i \sim \mu(a'|s'), i=1, ..., N \nonumber
        \end{align}
        where $t$ is the indicator variable $\mathds{1}[s'\textnormal{ is terminal}]$, and $Q'_{ens}$ represents the ensemble of target Q functions. In short, we sample $N$ actions from $\mu(a'|s')$ and take the value of the best action to form the target.
        The algorithm box describing the full training loop can be viewed in Algorithm \ref{alg:full_alg}.
        
        
         \textbf{Notably, we do not train an explicit neural network representing the policy.} At test-time, given a state $s$, we sample $N$ actions from $\muas$ and choose the action with the maximum value under the ensemble of Q functions (see Algorithm \ref{alg:pitest}). While \texttt{TestEnsemble} can differ from the \texttt{Ensemble} function used to compute target Q values\footnote{some examples of alternative choices are \texttt{mean}, \texttt{max}, UCB-style estimates, or simply using just one of the trained Q functions}, in this work we used the same ensembling procedure with $\lambda = 1.0$ (with the exception of experiments in Section \ref{sec:online_rl}).
        
  


\begin{algorithm}[tb]
   \caption{Test-Time Policy $\pi_{\textnormal{test}}$}
   \label{alg:pitest}
\begin{algorithmic}
    \FUNCTION{TestEnsemble(values)}
        \STATE \textbf{return } $\lambda \cdot \min(values) + (1-\lambda) \cdot \max(values)$
   \ENDFUNCTION
   \FUNCTION{$\pi_{\textnormal{test}}$}
        \STATE $\{a_i\}^N \sim \mu(a|s)$
        \STATE \textbf{return } $\argmax_{\ain}\texttt{TestEnsemble}\big(\{Q_i(s, a)\}_{i=1}^N\big)$
   \ENDFUNCTION
\end{algorithmic}
\end{algorithm}
        
    \subsection{Intuition for Offline EMaQ and Modeling Choice for the Base Behavior Policy}
        \label{sec:autoregressive}
        The intuition for how offline EMaQ aims to address the problem of erroneous value estimates can be understood from attending to equation \ref{eq:emaq_Q_targets} and the form of the test-time policy (Algorithm \ref{alg:pitest}). In equation \ref{eq:emaq_Q_targets} we observe that target values for the Q functions are computed by sampling actions \textbf{from the behavior policy estimate $\bm{\mu}$}, and not a separately learned policy that may sample out of distribution actions, as in BCQ. At test-time, our implicit policy is also formed by choosing amongst actions sampled from $\mu$. Hence, if $\mu$ accurately estimates the behavior policy well, we will never sample actions outside the support and will not need to evaluate the value of such actions. In the practical setting where $\mu$ may have inaccuracies, the hyperparameter $N$ acts as an implicit regularizer: Using very large values of $N$ maximizes the chance of sampling actions that are out of distribution and have erroneous value estimates, while smaller $N$ reduces the chance of this happening in every update iteration and therefore smoothens the incorrect values.
        
        With the importance of a good behavior estimates accentuated in our proposed method, we pay closer attention to the choice of generative model used for representing $\mu$. Past works \citep{fujimoto2018off, kumar2019stabilizing, wu2019behavior} have typically used Variational Auto-Encoders (VAEs) \citep{kingma2013auto,rezende2014stochastic} to represent the behavior distribution $\muas$. Unfortunately, after training the aggregate posterior $q_{\textnormal{agg}}(z) := \mathds{E}_x[q(z|x)]$ of a VAE does not typically align well with its prior, making it challenging to sample from in a manner that effectively covers the distribution it was trained on\footnote{past works typically clip the range of the latent variable $z$ and adjust the weighting of the KL term in the evidence lower-bound to ameliorate the situation}. We opt for using an autoregressive architecture based on MADE \citep{germain2015made} as it allows for representing more expressive distributions and enables more accurate sampling. Inspired by recent works \citep{metz2017discrete, van2020q}, our generative model architecture also makes use of discretization in each action dimension. Full details can be found in Appendix \ref{ap:autoregressive}.

\section{Related Work}
\label{sec:related_work}
\begin{figure*}[ht!]
        \centering
        \makebox[\textwidth][c]{
            \includegraphics[width=\textwidth]{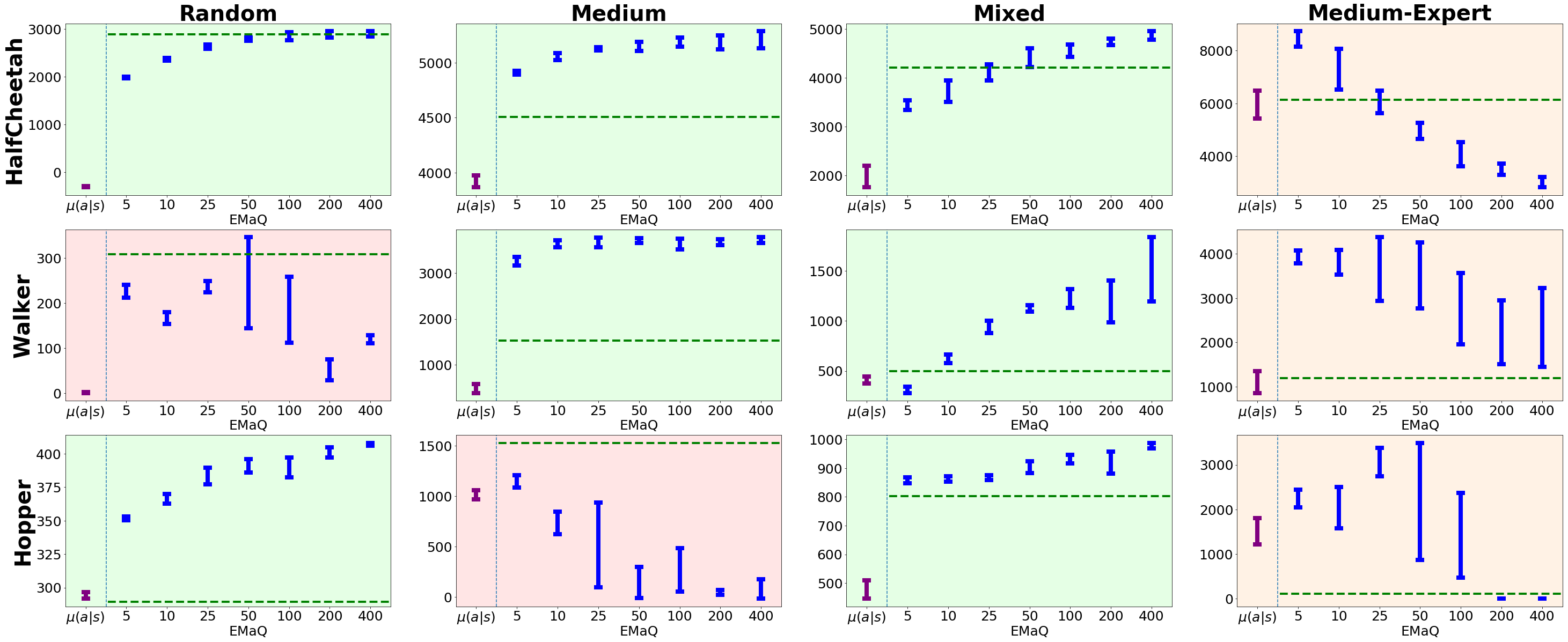}
        }
        \caption{
            \small
            Results for evaluating EMaQ on D4RL benchmark's~\citep{fu2020datasets} standard Mujoco domains, with $N \in \{5, 10, 25, 50, 100, 200, 400\}$. Values above $\muas$ represent the result of evaluating the base behavior policies. Horizontal green lines represent the reported performance of BEAR in the D4RL benchmark (apples to apples comparisons in Figure \ref{fig:mujocoplots}).
            The types of offline datasets are: \textbf{random}: 1M transitions are collected by a random agent, \textbf{medium}: 1M transitions are collected by a half-trained SAC \citep{haarnoja2018soft} policy, \textbf{mixed}: the replay buffer of this half-trained policy, and \textbf{medium-expert}: combination of the medium dataset and 1M additional transitions from a fully trained policy.
            Refer to main text (Section \ref{sec:ablation}) for description of color-coding.
            For better legibility, we have included a larger variant of these plots in the Appendix \ref{ap:larger_plots}. Full experimental details in Appendix \ref{ap:exp_details}.
            \vspace{-0.5cm}
        }
        \label{fig:ablation}
    \end{figure*}

\paragraph{Offline RL}
Many recent methods for offline RL \citep{fujimoto2018off, kumar2019stabilizing, wu2019behavior, jaques2019way}, where no interactive data collection is allowed during training, mostly rely on constraining the learned policy to stay close to the data collection distribution.
\citet{fujimoto2018off} clip the maximum deviation from actions sampled from a base behavior policy, while \citet{kumar2019stabilizing, wu2019behavior, jaques2019way} incorporate additional distributional penalties (such as KL divergence or MMD) for regularizing learned policies to remain close to the base policy. Our work is an instance of this family of approaches for offline RL; however, arguably our method is simpler as it does not involve learning an additional proposal-modifying policy~\cite{fujimoto2018off}, or modifying reward functions~\citep{kumar2019stabilizing, jaques2019way}. 

\paragraph{Finding Maximizing Actions}
Na\"{i}vely, EMaQ can also be seen as just performing approximate search for $\max_a Q(s,a)$ in standard Q-learning operator, which has been studied in various prior works  for Q-learning in large scale spaces (e.g. continuous).
NAF~\citep{gu2016continuous} and ICNN~\citep{amos2017input} directly constrain the function family of Q-functions such that the optimization can be closed-form or tractable.
QT-OPT \citep{kalashnikov2018qt} makes use of two iterations of the Cross-Entropy Method \citep{rubinstein2013cross}, while CAQL \citep{ryu2019caql} uses Mixed-Integer Programming to find the exact maximizing action while also introducing faster approximate alternatives. 
In \citep{van2020q} -- the most similar approach to our proposed method EMaQ -- throughout training a mixture of uniform and learned proposal distributions are used to sample actions. The sampled actions are then evaluated under the learned Q functions, and the top K maximizing actions are distilled back into the proposal distribution.
In contrast to our work, these works assume these are approximate maximization procedures and do not provide extensive analysis for the resulting TD operators. Our theoretical analysis on the family of TD operators described by EMaQ can therefore provide new perspectives on some of these highly successful Q-learning algorithms~\citep{kalashnikov2018qtopt,van2020q} -- particularly on how the proposal distribution affects convergence.


\paragraph{Modified Backup Operators}
Many prior works study modifications to standard backup operators to achieve different convergence properties for  action-value functions or their induced optimal policies. $\Psi$-learning~\citep{rawlik2013stochastic} proposes a modified operator that corresponds to policy iterations with KL-constrained updates~\citep{kakade2002natural,peters2010relative,schulman2015trust} where the action-value function converges to negative infinity for all sub-optimal actions. Similarly but distinctly, ~\citet{fox2015taming,jaques2017sequence,haarnoja2018soft,nachumetal17} study smoothed TD operators for a modified entropy- or KL-regularized RL objective.~\citet{bellemare2016increasing} derives a family of consistent Bellman operators and shows that they lead to increasing action gaps~\citep{farahmand2011action} for more stable learning. However, most of these operators have not been studied in offline learning. Our work adds a novel family operators to this rich literature of operators for RL, and provides strong empirical validation on how simple modifications of operators can translate to effective offline RL with function approximations.   



\section{Experiments}
    \label{sec:experiments}
    
    
    
    For all experiments we make use of the codebase of \citep{wu2019behavior}, which presents the BRAC off-policy algorithm and examines the importance of various factors in BCQ \citep{fujimoto2018off} and BEAR \citep{kumar2019stabilizing} methods. We implement EMaQ into this codebase.
    We make use of the recently proposed D4RL \citep{fu2020datasets} datasets for bechmarking offline RL.
    
    \textbf{Online EMaQ}\quad Despite obtaining \textbf{strong online RL results competitive with and outperforming SAC} ~\citep{haarnoja2018soft} (Figure \ref{fig:online_rl_main_fig}), as the main focus of our work is for the offline setting, we have placed our online RL methodology and results in Appendix \ref{sec:online_rl}. However, we emphasize that significance of obtaining strong online RL performance with effectively the same algorithm as the offline setting should not be overlooked. Most prior offline RL works have not considered how their methods might transfer to online or batched online setting, and recent work~\citep{nair2020accelerating} has demonstrated the challenges of finetuning from a policy trained offline, in the online setting.

    \subsection{Practical Effect of N and the Choice of Generative Model}
    \label{sec:ablation}
    \begin{figure*}[t]
        \centering
        \makebox[\textwidth][c]{
            \includegraphics[width=\textwidth]{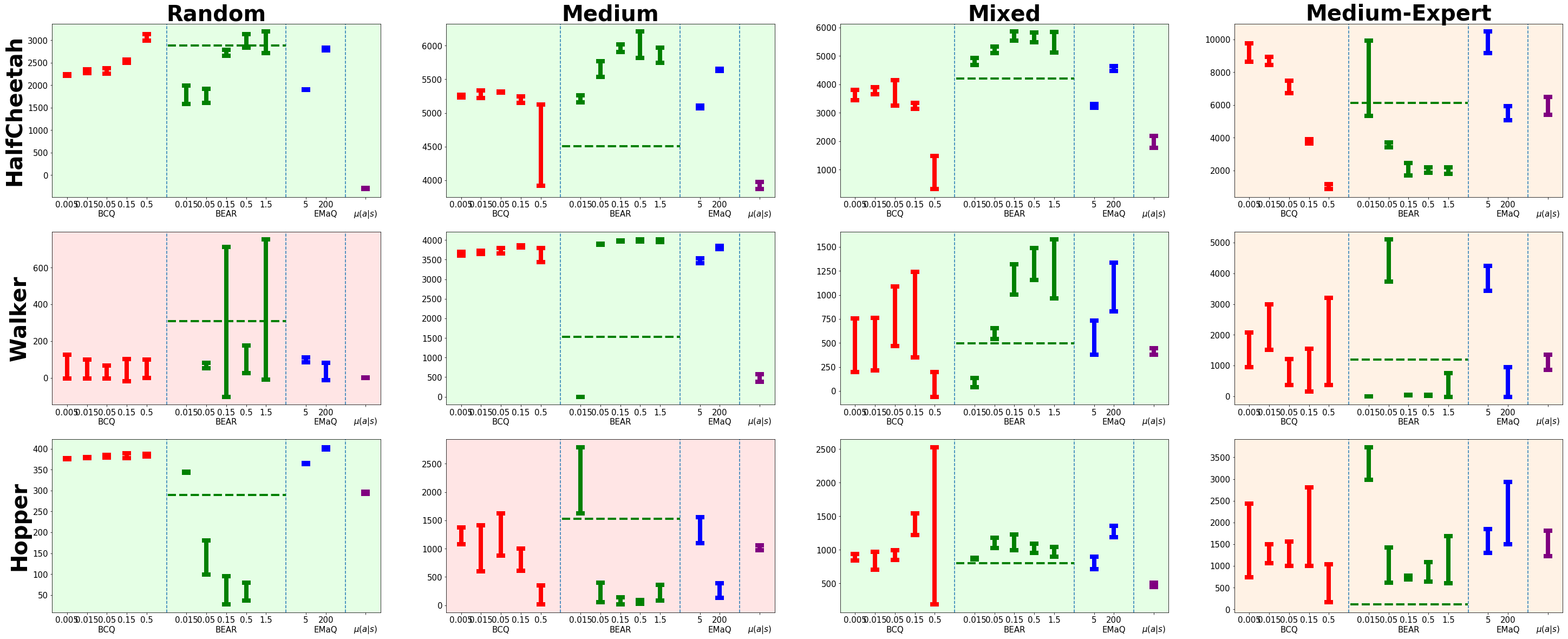}
        }
        \caption{
            \small
            Comparison of EMaQ, BCQ, and BEAR on D4RL \citep{fu2020datasets} benchmark domains when using our proposed autoregressive $\muas$. For both BCQ and BEAR, from left to right as the value of the hyperparameter increases, the allowed deviation from $\muas$ increases. Horizontal green lines represent the reported performance of BEAR in the D4RL benchmark. Color-coding follows Figure \ref{fig:ablation}.
            For better legibility, we have included a larger variant of these plots in the Appendix \ref{ap:larger_plots}. Full experimental details in Appendix \ref{ap:exp_details}.
            \vspace{-0.3cm}
        }
        \label{fig:mujocoplots}
    \end{figure*}
    \begin{table*}[t]
    \centering
    \caption{
        \small
        Results on a series of other environments and data settings from the D4RL benchmark \citep{fu2020d4rl}. Results are normalized to the range $[0, 100]$, per the D4RL normalization scheme. For each method, for each environment and data setting the results of the best hyperparameter setting are reported. The last column indicates the best value of $N$ in EMaQ amongst the considered hyperparameters (for the larger \texttt{antmaze} domains, we do not report this value since no value of $N$ obtains nonzero returns). All the domains below the blue double-line are effectively unsolved by all methods. We have technical difficulties in evaluating BEAR on the kitchen domains. This manuscript will be updated upon obtaining these results. Additional details can be found in Appendix \ref{ap:other_envs}.
    }
    \begin{tabular}{|c|c|c|c|c||c|}
        \hline
        \textbf{Setting} & \textbf{BC} & \textbf{BCQ} & \textbf{BEAR} & \textbf{EMaQ} & \textbf{EMaQ N} \\
        \hline
        kitchen-complete & 27.2 $\pm$ 3.2 & 26.5 $\pm$ 4.8 & --- & \textbf{36.9 $\pm$ 3.7} & 64 \\
        kitchen-partial & 46.2 $\pm$ 2.8 & 69.3 $\pm$ 5.2 & --- & \textbf{74.6 $\pm$ 0.6} & 8 \\
        kitchen-mixed & 52.5 $\pm$ 3.8 & 65.5 $\pm$ 1.8 & --- & \textbf{70.8 $\pm$ 2.3} & 8 \\
        \hline
        antmaze-umaze & 59.0 $\pm$ 5.5 & 25.5 $\pm$ 20.0 & 56.3 $\pm$ 28.8 & \textbf{91.0 $\pm$ 4.6} & 100 \\
        antmaze-umaze-diverse & 58.8 $\pm$ 9.5 & 68.0 $\pm$ 19.0 & 57.5 $\pm$ 39.2 & \textbf{94.0 $\pm$ 2.4} & 50 \\
        \arrayrulecolor{cyan}\hline\hline\arrayrulecolor{black}
        antmaze-medium-play & \textbf{0.7 $\pm$ 1.0} & \textbf{3.5 $\pm$ 6.1} & \textbf{0.2 $\pm$ 0.4} & 0.0 $\pm$ 0.0 & --- \\
        antmaze-medium-diverse & \textbf{0.4 $\pm$ 0.8} & \textbf{0.5 $\pm$ 0.9} & \textbf{0.2 $\pm$ 0.4} & 0.0 $\pm$ 0.0 & --- \\
        antmaze-large-play & 0.0 $\pm$ 0.0 & 0.0 $\pm$ 0.0 & 0.0 $\pm$ 0.0 & 0.0 $\pm$ 0.0 & --- \\
        antmaze-large-diverse & 0.0 $\pm$ 0.0 & 0.0 $\pm$ 0.0 & 0.0 $\pm$ 0.0 & 0.0 $\pm$ 0.0 & --- \\
        \hline
        door-cloned & \textbf{0.0 $\pm$ 0.0} & \textbf{0.2 $\pm$ 0.4} & \textbf{0.0 $\pm$ 0.0} & \textbf{0.2 $\pm$ 0.3} & 64 \\
        hammer-cloned & \textbf{1.2 $\pm$ 0.6} & \textbf{1.3 $\pm$ 0.5} & \textbf{0.3 $\pm$ 0.0} & \textbf{1.0 $\pm$ 0.7} & 64 \\
        pen-cloned & 24.5 $\pm$ 10.2 & \textbf{43.8 $\pm$ 6.4} & -3.1 $\pm$ 0.2 & 27.9 $\pm$ 3.7 & 128 \\
        relocate-cloned & \textbf{-0.2 $\pm$ 0.0} & \textbf{-0.2 $\pm$ 0.0} & \textbf{0.0 $\pm$ 0.0} & \textbf{-0.2 $\pm$ 0.2} & 16 \\
        \hline
        
    \end{tabular}
    \label{tab:complete_other_envs}
\end{table*}
    
    We begin by empirically evaluating key aspects of offline EMaQ, namely the effect of $N$, and choice of generative model used for representing the behavior estimate $\mu$.
    In prior approaches such as those described in the background section of this work, care must be taken in choosing the hyperparameter that dictates the extent to which learned policies can deviate from the base behavior policies; too small and we cannot improve upon the base policy, too large and the value of actions cannot be correctly estimated. In EMaQ, at least in theory, choosing higher values of $N$ should result in strictly better policies. Additionally, there exists a concern that $N$ may need to be impractically large. Thus, we empirically investigate to what extent the monotonic trend holds in practice, and seek to understand what magnitudes of $N$ result in good policies in practical benchmark domains. Figure \ref{fig:ablation} presents our results with $N \in \{5, 10, 25, 50, 100, 200, 400\}$. In the green plots, we observe that empirical results follow our intuitions: with increasing $N$ the resultant policies become better.
    In the medium-expert settings (i.e. orange plots), while for smaller values of $N$ we observe strong performance, there appears to be a downward trend. As discussed in Section \ref{sec:autoregressive}, smaller values of $N$ result in an implicit regularization. Hence, the orange plots may indicate that even with the stronger choice of generative models in EMaQ, inaccuracies in value estimates may still exist, suggesting the need for future work that introduces better regularizers for the value functions than ensembling~\citep{kumar2020conservative}.
    Lastly, the red plots indicate settings where behavior is erratic. Closer examination of training curves
    and our experiments with other off-policy methods (Figure \ref{fig:mujocoplots}) suggests that this may be due to the intrinsic nature of these environment and data settings.
    
    The dashed horizontal lines in Figure \ref{fig:mujocoplots} represent the performance of BEAR -- which uses a VAE for representing $\mu$ -- as reported in the D4RL \citep{fu2020datasets} benchmark paper (apples to apples comparison in Section \ref{sec:mujoco_results}). Our results demonstrate that the combination of a strong generative model and EMaQ's simply constrained backup operator can match and in many cases noticeably exceed results from prior algorithms and design choices. Comparing Figure \ref{fig:mujocoplots} to Figure \ref{fig:vae_ablation} in the Appendix, \textbf{we observe that our choice of generative model is crucial to the performance of EMaQ}. With a VAE architecture as used in prior work, EMaQ's performance is significantly reduced, in most cases worse than prior reported results for BEAR, and never exhibits a monotonic trend as a function of $N$. This is despite the fact that when evaluating the performance of the behavior estimate $\mu$ under the two architecture choices results in almost identical results (the first column of each sub-plot corresponding to $\muas$). We do not believe that autoregressive models are intrinsically better than VAEs, but rather our results demonstrate the need for more careful attention on the choice of $\muas$.
    \textbf{Since EMaQ is closely tied to the choice of behavior model, it may be more effective for evaluating how well $\bm{\muas}$ represents the given offline dataset.
    From a practical perspective, our results suggest that for a given domain, focusing efforts on building-in good inductive biases in the generative models and value functions might be sufficient to obtain strong offline RL performance in many domains.}
    
    
    \subsection{Comparison on D4RL Offline RL Benchmark}
    \label{sec:mujoco_results}

    To evaluate EMaQ with respect to prior methods, we compare to two popular and closely related prior methods for offline RL, BCQ \citep{fujimoto2018off} and BEAR \citep{kumar2019stabilizing}.
    As with the previous section, full experimental details can be found in Appendix \ref{ap:expmujoco}.
    Figure \ref{fig:mujocoplots} and Table \ref{tab:complete_other_envs} present our empirical results.
    Note that with our proposed autoregressive models, the results for BEAR are matched and in some cases noticeably above the values reported in the D4RL benchmark \citep{fu2020datasets} (green horizontal lines).
    For easier interpretation, the plots are colored the same as in Figure \ref{fig:ablation}.
    Our key take-away is that despite its simplistic form, EMaQ is strongly competitive with prior state-of-the-art methods, and in the case of Table \ref{tab:complete_other_envs} outperforms prior approaches. Despite this, there remain many domains in the D4RL benchmark on which none of the considered algorithms make any progress (Table \ref{tab:complete_other_envs}), indicating that much algorithmic advances are still necessary for solving many of the considered domains.
    
    
    A very eye-catching result in above figures is that in almost all settings of the standard Mujoco environments (Figures \ref{fig:ablation} and \ref{fig:mujocoplots}), just $N=5$ significantly improves upon $\muas$ and in most settings matches or exceeds significantly beyond previously reported results. Concretely, this means that in the HalfCheetah-Random setting, if at each state we sample $5$ actions uniformly random and choose the best one under the learned Q-value function, we convert a random policy with return 0 to a policy with return 2000. \textbf{In this way, EMaQ provides a quite intuitive and surprising measure of the complexity for offline RL problems. This empirical observation also corroborates our discussion in Section \ref{sec:complexity_measure}, encouraging future theoretical investigations into $\bm{\emaqdelta(s,N)}$.}

\vspace{-0.3cm}
\section{Conclusion}
    \vspace{-0.3cm}
    
In this work, we investigate a significant simplification of the BCQ~\citep{fujimoto2018off} algorithm by removing the heuristic perturbation network. By introducing the Expect-Max Q-Learning operator, we present a novel theoretical setup that takes into account the proposal distribution $\mu(a|s)$ and the number of action samples $N$, and hence more closely matches the resulting practical algorithm.
With fewer moving parts and one less function approximator, EMaQ matches and outperforms prior state-of-the-art in online and offline RL.
Our investigations with EMaQ demonstrate the significance of careful considerations in the design of generative models used. Furthermore, our theoretical and empirical findings bring into light novel notions of complexity for offline RL problems.
Given the simplicity, tractable theory, and state-of-the-art performance of EMaQ, we hope our work can serve as a foundation for future works on understanding and improving offline RL.

\section*{Acknowledgements}
SKSG would like to thank Ofir Nachum, Karol Hausman, Corey Lynch, Abhishek Gupta, Alex Irpan, and Elman Mansimov for valuable discussions at different points over the course of this work. We would also like to thank the authors of \citep{wu2019behavior} whose codebase this work built upon, and the authors of D4RL \citep{fu2020d4rl} for building such a valuable benchmark.

\bibliography{iclr2021_conference}

\begin{thebibliography}{45}
\providecommand{\natexlab}[1]{#1}
\providecommand{\url}[1]{\texttt{#1}}
\expandafter\ifx\csname urlstyle\endcsname\relax
  \providecommand{\doi}[1]{doi: #1}\else
  \providecommand{\doi}{doi: \begingroup \urlstyle{rm}\Url}\fi

\bibitem[Amos et~al.(2017)Amos, Xu, and Kolter]{amos2017input}
Amos, B., Xu, L., and Kolter, J.~Z.
\newblock Input convex neural networks.
\newblock In \emph{Proceedings of the 34th International Conference on Machine
  Learning-Volume 70}, pp.\  146--155. JMLR. org, 2017.

\bibitem[Bellemare et~al.(2016)Bellemare, Ostrovski, Guez, Thomas, and
  Munos]{bellemare2016increasing}
Bellemare, M.~G., Ostrovski, G., Guez, A., Thomas, P.~S., and Munos, R.
\newblock Increasing the action gap: New operators for reinforcement learning.
\newblock In \emph{Thirtieth AAAI Conference on Artificial Intelligence}, 2016.

\bibitem[Chen et~al.(2017)Chen, Sidor, Abbeel, and Schulman]{chen2017ucb}
Chen, R.~Y., Sidor, S., Abbeel, P., and Schulman, J.
\newblock Ucb exploration via q-ensembles.
\newblock \emph{arXiv preprint arXiv:1706.01502}, 2017.

\bibitem[Degris et~al.(2012)Degris, White, and Sutton]{degris2012off}
Degris, T., White, M., and Sutton, R.~S.
\newblock Off-policy actor-critic.
\newblock \emph{arXiv preprint arXiv:1205.4839}, 2012.

\bibitem[Farahmand(2011)]{farahmand2011action}
Farahmand, A.-m.
\newblock Action-gap phenomenon in reinforcement learning.
\newblock In \emph{Advances in Neural Information Processing Systems}, pp.\
  172--180, 2011.

\bibitem[Fox et~al.(2015)Fox, Pakman, and Tishby]{fox2015taming}
Fox, R., Pakman, A., and Tishby, N.
\newblock Taming the noise in reinforcement learning via soft updates.
\newblock \emph{arXiv preprint arXiv:1512.08562}, 2015.

\bibitem[Fu et~al.(2020{\natexlab{a}})Fu, Kumar, Nachum, Tucker, and
  Levine]{fu2020d4rl}
Fu, J., Kumar, A., Nachum, O., Tucker, G., and Levine, S.
\newblock D4rl: Datasets for deep data-driven reinforcement learning,
  2020{\natexlab{a}}.

\bibitem[Fu et~al.(2020{\natexlab{b}})Fu, Kumar, Nachum, Tucker, and
  Levine]{fu2020datasets}
Fu, J., Kumar, A., Nachum, O., Tucker, G., and Levine, S.
\newblock Datasets for data-driven reinforcement learning.
\newblock \emph{arXiv preprint arXiv:2004.07219}, 2020{\natexlab{b}}.

\bibitem[Fujimoto et~al.(2018{\natexlab{a}})Fujimoto, Meger, and
  Precup]{fujimoto2018off}
Fujimoto, S., Meger, D., and Precup, D.
\newblock Off-policy deep reinforcement learning without exploration.
\newblock \emph{arXiv preprint arXiv:1812.02900}, 2018{\natexlab{a}}.

\bibitem[Fujimoto et~al.(2018{\natexlab{b}})Fujimoto, Van~Hoof, and
  Meger]{fujimoto2018addressing}
Fujimoto, S., Van~Hoof, H., and Meger, D.
\newblock Addressing function approximation error in actor-critic methods.
\newblock \emph{arXiv preprint arXiv:1802.09477}, 2018{\natexlab{b}}.

\bibitem[Germain et~al.(2015)Germain, Gregor, Murray, and
  Larochelle]{germain2015made}
Germain, M., Gregor, K., Murray, I., and Larochelle, H.
\newblock Made: Masked autoencoder for distribution estimation.
\newblock In \emph{International Conference on Machine Learning}, pp.\
  881--889, 2015.

\bibitem[Gu et~al.(2016{\natexlab{a}})Gu, Lillicrap, Ghahramani, Turner, and
  Levine]{gu2016q}
Gu, S., Lillicrap, T., Ghahramani, Z., Turner, R.~E., and Levine, S.
\newblock Q-prop: Sample-efficient policy gradient with an off-policy critic.
\newblock \emph{arXiv preprint arXiv:1611.02247}, 2016{\natexlab{a}}.

\bibitem[Gu et~al.(2016{\natexlab{b}})Gu, Lillicrap, Sutskever, and
  Levine]{gu2016continuous}
Gu, S., Lillicrap, T., Sutskever, I., and Levine, S.
\newblock Continuous deep q-learning with model-based acceleration.
\newblock In \emph{International Conference on Machine Learning},
  2016{\natexlab{b}}.

\bibitem[Gu et~al.(2017)Gu, Holly, Lillicrap, and Levine]{gu2017deep}
Gu, S., Holly, E., Lillicrap, T., and Levine, S.
\newblock Deep reinforcement learning for robotic manipulation with
  asynchronous off-policy updates.
\newblock In \emph{International Conference on Robotics and Automation}, 2017.

\bibitem[Haarnoja et~al.(2018)Haarnoja, Zhou, Abbeel, and
  Levine]{haarnoja2018soft}
Haarnoja, T., Zhou, A., Abbeel, P., and Levine, S.
\newblock Soft actor-critic: Off-policy maximum entropy deep reinforcement
  learning with a stochastic actor.
\newblock \emph{arXiv preprint arXiv:1801.01290}, 2018.

\bibitem[Hasselt(2010)]{hasselt2010double}
Hasselt, H.~V.
\newblock Double q-learning.
\newblock In \emph{Advances in neural information processing systems}, pp.\
  2613--2621, 2010.

\bibitem[Jaques et~al.(2017)Jaques, Gu, Bahdanau, Hern{\'a}ndez-Lobato, Turner,
  and Eck]{jaques2017sequence}
Jaques, N., Gu, S., Bahdanau, D., Hern{\'a}ndez-Lobato, J.~M., Turner, R.~E.,
  and Eck, D.
\newblock Sequence tutor: Conservative fine-tuning of sequence generation
  models with kl-control.
\newblock In \emph{Proceedings of the 34th International Conference on Machine
  Learning-Volume 70}, pp.\  1645--1654. JMLR. org, 2017.

\bibitem[Jaques et~al.(2019)Jaques, Ghandeharioun, Shen, Ferguson, Lapedriza,
  Jones, Gu, and Picard]{jaques2019way}
Jaques, N., Ghandeharioun, A., Shen, J.~H., Ferguson, C., Lapedriza, A., Jones,
  N., Gu, S., and Picard, R.
\newblock Way off-policy batch deep reinforcement learning of implicit human
  preferences in dialog.
\newblock \emph{arXiv preprint arXiv:1907.00456}, 2019.

\bibitem[Kakade(2002)]{kakade2002natural}
Kakade, S.~M.
\newblock A natural policy gradient.
\newblock In \emph{Advances in neural information processing systems}, pp.\
  1531--1538, 2002.

\bibitem[Kalashnikov et~al.(2018{\natexlab{a}})Kalashnikov, Irpan, Pastor,
  Ibarz, Herzog, Jang, Quillen, Holly, Kalakrishnan, Vanhoucke, and
  Levine]{kalashnikov2018qtopt}
Kalashnikov, D., Irpan, A., Pastor, P., Ibarz, J., Herzog, A., Jang, E.,
  Quillen, D., Holly, E., Kalakrishnan, M., Vanhoucke, V., and Levine, S.
\newblock Qt-opt: Scalable deep reinforcement learning for vision-based robotic
  manipulation.
\newblock In \emph{Conference on Robot Learning}, 2018{\natexlab{a}}.

\bibitem[Kalashnikov et~al.(2018{\natexlab{b}})Kalashnikov, Irpan, Pastor,
  Ibarz, Herzog, Jang, Quillen, Holly, Kalakrishnan, Vanhoucke,
  et~al.]{kalashnikov2018qt}
Kalashnikov, D., Irpan, A., Pastor, P., Ibarz, J., Herzog, A., Jang, E.,
  Quillen, D., Holly, E., Kalakrishnan, M., Vanhoucke, V., et~al.
\newblock Qt-opt: Scalable deep reinforcement learning for vision-based robotic
  manipulation.
\newblock \emph{arXiv preprint arXiv:1806.10293}, 2018{\natexlab{b}}.

\bibitem[Kingma \& Welling(2013)Kingma and Welling]{kingma2013auto}
Kingma, D.~P. and Welling, M.
\newblock Auto-encoding variational bayes.
\newblock \emph{arXiv preprint arXiv:1312.6114}, 2013.

\bibitem[Kumar et~al.(2019)Kumar, Fu, Soh, Tucker, and
  Levine]{kumar2019stabilizing}
Kumar, A., Fu, J., Soh, M., Tucker, G., and Levine, S.
\newblock Stabilizing off-policy q-learning via bootstrapping error reduction.
\newblock In \emph{Advances in Neural Information Processing Systems}, 2019.

\bibitem[Kumar et~al.(2020)Kumar, Zhou, Tucker, and
  Levine]{kumar2020conservative}
Kumar, A., Zhou, A., Tucker, G., and Levine, S.
\newblock Conservative q-learning for offline reinforcement learning.
\newblock \emph{arXiv preprint arXiv:2006.04779}, 2020.

\bibitem[Levine et~al.(2020)Levine, Kumar, Tucker, and Fu]{levine2020offline}
Levine, S., Kumar, A., Tucker, G., and Fu, J.
\newblock Offline reinforcement learning: Tutorial, review, and perspectives on
  open problems.
\newblock \emph{arXiv preprint arXiv:2005.01643}, 2020.

\bibitem[Lillicrap et~al.(2015)Lillicrap, Hunt, Pritzel, Heess, Erez, Tassa,
  Silver, and Wierstra]{lillicrap2015continuous}
Lillicrap, T.~P., Hunt, J.~J., Pritzel, A., Heess, N., Erez, T., Tassa, Y.,
  Silver, D., and Wierstra, D.
\newblock Continuous control with deep reinforcement learning.
\newblock \emph{arXiv preprint arXiv:1509.02971}, 2015.

\bibitem[Mandel et~al.(2014)Mandel, Liu, Levine, Brunskill, and
  Popovic]{mandel2014offline}
Mandel, T., Liu, Y.-E., Levine, S., Brunskill, E., and Popovic, Z.
\newblock Offline policy evaluation across representations with applications to
  educational games.
\newblock In \emph{International Conference on Autonomous Agents and Multiagent
  Systems}, 2014.

\bibitem[Matsushima et~al.(2020)Matsushima, Furuta, Matsuo, Nachum, and
  Gu]{matsushima2020deployment}
Matsushima, T., Furuta, H., Matsuo, Y., Nachum, O., and Gu, S.
\newblock Deployment-efficient reinforcement learning via model-based offline
  optimization.
\newblock \emph{arXiv preprint arXiv:2006.03647}, 2020.

\bibitem[Metz et~al.(2017)Metz, Ibarz, Jaitly, and Davidson]{metz2017discrete}
Metz, L., Ibarz, J., Jaitly, N., and Davidson, J.
\newblock Discrete sequential prediction of continuous actions for deep rl.
\newblock \emph{arXiv preprint arXiv:1705.05035}, 2017.

\bibitem[Mnih et~al.(2015)Mnih, Kavukcuoglu, Silver, Rusu, Veness, Bellemare,
  Graves, Riedmiller, Fidjeland, Ostrovski, et~al.]{mnih2015human}
Mnih, V., Kavukcuoglu, K., Silver, D., Rusu, A.~A., Veness, J., Bellemare,
  M.~G., Graves, A., Riedmiller, M., Fidjeland, A.~K., Ostrovski, G., et~al.
\newblock Human-level control through deep reinforcement learning.
\newblock \emph{Nature}, 518\penalty0 (7540):\penalty0 529--533, 2015.

\bibitem[Murphy et~al.(2001)Murphy, van~der Laan, Robins, and
  Group]{murphy2001marginal}
Murphy, S.~A., van~der Laan, M.~J., Robins, J.~M., and Group, C. P. P.~R.
\newblock Marginal mean models for dynamic regimes.
\newblock \emph{Journal of the American Statistical Association}, 2001.

\bibitem[Nachum et~al.(2017)Nachum, Norouzi, and Schuurmans]{nachumetal17}
Nachum, O., Norouzi, M., and Schuurmans, D.
\newblock Bridging the gap between value and policy based reinforcement
  learning.
\newblock In \emph{Advances in Neural Information Processing Systems}, pp.\
  2772--2782, 2017.

\bibitem[Nair et~al.(2020)Nair, Dalal, Gupta, and Levine]{nair2020accelerating}
Nair, A., Dalal, M., Gupta, A., and Levine, S.
\newblock Accelerating online reinforcement learning with offline datasets.
\newblock \emph{arXiv preprint arXiv:2006.09359}, 2020.

\bibitem[Peters et~al.(2010)Peters, Mulling, and Altun]{peters2010relative}
Peters, J., Mulling, K., and Altun, Y.
\newblock Relative entropy policy search.
\newblock In \emph{Twenty-Fourth AAAI Conference on Artificial Intelligence},
  2010.

\bibitem[Precup et~al.(2001)Precup, Sutton, and Dasgupta]{precup2001off}
Precup, D., Sutton, R.~S., and Dasgupta, S.
\newblock Off-policy temporal-difference learning with function approximation.
\newblock In \emph{International Conference on Machine Learning}, 2001.

\bibitem[Rawlik et~al.(2013)Rawlik, Toussaint, and
  Vijayakumar]{rawlik2013stochastic}
Rawlik, K., Toussaint, M., and Vijayakumar, S.
\newblock On stochastic optimal control and reinforcement learning by
  approximate inference.
\newblock In \emph{Twenty-Third International Joint Conference on Artificial
  Intelligence}, 2013.

\bibitem[Rezende et~al.(2014)Rezende, Mohamed, and
  Wierstra]{rezende2014stochastic}
Rezende, D.~J., Mohamed, S., and Wierstra, D.
\newblock Stochastic backpropagation and approximate inference in deep
  generative models.
\newblock \emph{arXiv preprint arXiv:1401.4082}, 2014.

\bibitem[Ross et~al.(2011)Ross, Gordon, and Bagnell]{ross2011reduction}
Ross, S., Gordon, G., and Bagnell, D.
\newblock A reduction of imitation learning and structured prediction to
  no-regret online learning.
\newblock In \emph{Proceedings of the fourteenth international conference on
  artificial intelligence and statistics}, pp.\  627--635, 2011.

\bibitem[Rubinstein \& Kroese(2013)Rubinstein and Kroese]{rubinstein2013cross}
Rubinstein, R.~Y. and Kroese, D.~P.
\newblock \emph{The cross-entropy method: a unified approach to combinatorial
  optimization, Monte-Carlo simulation and machine learning}.
\newblock Springer Science \& Business Media, 2013.

\bibitem[Ryu et~al.(2019)Ryu, Chow, Anderson, Tjandraatmadja, and
  Boutilier]{ryu2019caql}
Ryu, M., Chow, Y., Anderson, R., Tjandraatmadja, C., and Boutilier, C.
\newblock Caql: Continuous action q-learning.
\newblock \emph{arXiv preprint arXiv:1909.12397}, 2019.

\bibitem[Schulman et~al.(2015)Schulman, Levine, Abbeel, Jordan, and
  Moritz]{schulman2015trust}
Schulman, J., Levine, S., Abbeel, P., Jordan, M., and Moritz, P.
\newblock Trust region policy optimization.
\newblock In \emph{International conference on machine learning}, pp.\
  1889--1897, 2015.

\bibitem[Van~de Wiele et~al.(2020)Van~de Wiele, Warde-Farley, Mnih, and
  Mnih]{van2020q}
Van~de Wiele, T., Warde-Farley, D., Mnih, A., and Mnih, V.
\newblock Q-learning in enormous action spaces via amortized approximate
  maximization.
\newblock \emph{arXiv preprint arXiv:2001.08116}, 2020.

\bibitem[Van~Hasselt et~al.(2016)Van~Hasselt, Guez, and Silver]{van2016deep}
Van~Hasselt, H., Guez, A., and Silver, D.
\newblock Deep reinforcement learning with double q-learning.
\newblock In \emph{Thirtieth AAAI conference on artificial intelligence}, 2016.

\bibitem[Weng(2020)]{weng_2020}
Weng, L.
\newblock Exploration strategies in deep reinforcement learning, Jun 2020.
\newblock URL
  \url{https://lilianweng.github.io/lil-log/2020/06/07/exploration-strategies-in-deep-reinforcement-learning.html}.

\bibitem[Wu et~al.(2019)Wu, Tucker, and Nachum]{wu2019behavior}
Wu, Y., Tucker, G., and Nachum, O.
\newblock Behavior regularized offline reinforcement learning.
\newblock \emph{arXiv preprint arXiv:1911.11361}, 2019.

\end{thebibliography}
\bibliographystyle{icml2021}


\clearpage
\onecolumn
\appendix
\section{Proofs}
    All the provided proofs operate under the setting where $\muas$ has full support over the action space. When this assumption is not satisfied, the provided proofs can be transferred by assuming we are operating in a new MDP $M_\mu$ as defined below.
    
    Given the MDP $M = \langle \mathcal{S}, \mathcal{A}, r, \mathcal{P}, \gamma \rangle$ and $\muas$, let us define the new MDP $M_\mu = \langle \mathcal{S}_\mu, \mathcal{A}_\mu, r, \mathcal{P}, \gamma \rangle$, where $\mathcal{S}_\mu$ denotes the set of reachable states by $\mu$, and $\mathcal{A}_\mu$ is $\mathcal{A}$ restricted to the support of $\muas$ in each state in $\mathcal{S}_\mu$.
    
    \label{ap:proofs}
    \subsection{Contraction Mapping}
        \label{ap:contraction_proof}
        \textbf{Theorem 3.1.} \emph{
            In the tabular setting, for any $N \in \mathds{N}$, $\tmun$ is a contraction operator in the $\mathcal{L}_\infty$ norm. Hence, with repeated applications of the $\tmun$, any initial $Q$ function converges to a unique fixed point.
        }
        
        \begin{proof}
        Let $Q_1$ and $Q_2$ be two arbitrary $Q$ functions.
        \begin{align}
            & \linf{\tmun Q_1 - \tmun Q_2} = \\
            & \maxsa{
                \Big(r(s,a) + \gamma \cdot \mathds{E}_{s'}\mathds{E}_{\ain}[\max_{\ain} Q_1(s',a')]\Big)
                - \Big(r(s,a) + \gamma \cdot \mathds{E}_{s'}\mathds{E}_{\ain}[\max_{\ain} Q_2(s',a')]\Big)
            } = \\
            & \gamma \cdot \maxsa{
                \mathds{E}_{s'} \mathds{E}_{\ain} \Big[\max_{\ain} Q_1(s',a') - \max_{\ain} Q_2(s',a')\Big]
            } \leq \\
            & \gamma \cdot \max_{s,a} \mathds{E}_{s'} \mathds{E}_{\ain}
                \left \lvert \max_{\ain} Q_1(s',a') - \max_{\ain} Q_2(s',a') \right \rvert \leq \\
            & \gamma \cdot \max_{s,a} \mathds{E}_{s'} \mathds{E}_{\ain} \linf{Q_1 - Q_2} = \label{eq:needslemma}\\
            & \gamma \cdot \linf{Q_1 - Q_2}
        \end{align}
        where line \ref{eq:needslemma} is due to the following: Let $\hat{a} = \argmax_{\ain} Q_1(s', a_i)$,
        \begin{align}
            \max_{\ain} Q_1(s',a') - \max_{\ain} Q_2(s',a') &= Q_1(s', \hat{a}) - \max_{\ain} Q_2(s',a') \\
            &\leq Q_1(s', \hat{a}) - Q_2(s', \hat{a}) \\
            &\leq \linf{Q_1 - Q_2}
        \end{align}
        
        \end{proof}
    
    \subsection{Limiting Behavior}
        \label{ap:limit_proof}
        \textbf{Theorem 3.3.} \emph{
            Let $\pimustar$ denote the optimal policy from the class of policies whose actions are restricted to lie within the support of the policy $\mu(a|s)$.
            Let $\qmustar$ denote the Q-value function corresponding to $\pimustar$. Furthermore, let $Q_\mu$ denote the Q-value function of the policy $\mu(a|s)$. Let $\mu^*(s) := \int_{\mbox{Support}(\pi^*_\mu(a|s))} \muas$ denote the probability of optimal actions under $\muas$. Under the assumption that $\inf_{s} \mu^*(s) > 0$ and $r(s,a)$, we have that,
                \begin{equation*}
                    Q^1_\mu = Q_\mu \qquad\qquad\mbox{ and }\qquad\qquad \lim_{N\rightarrow\infty} Q^N_\mu = \qmustar
                \end{equation*}
        }
        Let $\mu^*(s) := \int_{\mbox{Support}(\pi^*_\mu(a|s))} \muas$ denote the probability of optimal actions under $\muas$. To show $\lim_{N\rightarrow\infty} Q^N_\mu = \qmustar$, we also require the additional assumption that $\inf_{s} \mu^*(s) > 0$.
        \begin{proof}
Given that,
\begin{equation}
    \mathcal{T}^1_\mu Q(s,a) := r(s,a) + \gamma \cdot \mathds{E}_{s'}\mathds{E}_{\ain \sim \mu(\cdot|s')}\left[Q(s',a')\right]
\end{equation}
the unique fixed-point of $\mathcal{T}^1_\mu$ is the Q-value function of the policy $\muas$. Hence $Q^1_\mu = Q_\mu$.

The second part of this theorem will be proven as a Corollary to Theorem \ref{theorem:bounds}
\end{proof}
    
    \subsection{Increasingly Better Policies}
        \label{ap:better_policy}
        \textbf{Theorem 3.4.} \emph{
            For all $N, M \in \mathds{N}$, where $N > M$, we have that $\forall s \in \mathcal{S}, \forall a \in \textnormal{Support}(\mu(\cdot|s))$, $\qmun(s,a) \geq Q^M_\mu(s,a)$. Hence, $\pi^N_\mu(a|s)$ is at least as good of a policy as $\pi^M_\mu(a|s)$.
        }
        \begin{proof}
It is sufficient to show that $\forall s,a, Q_\mu^{N+1}(s,a) \geq \qmun(s,a)$. We will do so by induction. Let $Q^i$ denote the resulting function after applying $\mathcal{T}_\mu^{N+1}$, $i$ times, starting from $\qmun$.

\noindent \boxed{\textnormal{Base Case}}

\noindent By definition $Q^0 := \qmun$. Let $s \in \mathcal{S}$, $a \in \mathcal{A}$.
\begin{align}
    Q^1(s,a) &= \mathcal{T}_\mu^{N+1}Q^0(s,a)\\
    &= r(s,a) + \gamma \cdot \mathds{E}_{s'}\mathds{E}_{\{a_i\}^{N+1} \sim \mu(a'|s')}[\max_{\{a_i\}^{N+1}} Q^0(s',a')]\\
    &\geq r(s,a) + \gamma \cdot \mathds{E}_{s'}\mathds{E}_{\ain \sim \mu(a'|s')}[\max_{\ain} Q^0(s',a')]\\
    &= r(s,a) + \gamma \cdot \mathds{E}_{s'}\mathds{E}_{\ain \sim \mu(a'|s')}[\max_{\ain} \qmun(s',a')]\\
    &= \qmun(s,a)\\
    &= Q^0(s,a)
\end{align}

\noindent \boxed{\textnormal{Induction Step}}

\noindent Assume $\forall s,a, Q^i(s,a) \geq Q^{i-1}(s,a)$.

\begin{align}
    Q^{i+1}(s,a) - Q^i(s,a) &= \mathcal{T}_\mu^{N+1}Q^i(s,a) - \mathcal{T}_\mu^{N+1}Q^{i-1}(s,a)\\
    &= \gamma \cdot \mathds{E}_{s'}\mathds{E}_{\{a_i\}^{N+1} \sim \mu(a'|s')}[\max_{\{a_i\}^{N+1}} Q^i(s',a') - \max_{\{a_i\}^{N+1}} Q^{i-1}(s',a')]\\
    &\geq 0
\end{align}

\noindent Hence, by induction we have to $\forall i,j, i > j \implies \forall s,a, Q^i(s,a) \geq Q^j(s,a)$. Since $Q^0 = \qmun$ and $\lim_{i \rightarrow \infty} Q^i = Q^{N+1}_\mu$, we have than $\forall s,a, Q^{N+1}_\mu(s,a) \geq \qmun(s,a)$. Thus $\pi^{N+1}_\mu$ is a better policy than $\pi^N_\mu$, and by a simple induction argument, $\pi^N_\mu$ is a better policy than $\pi^M_\mu$ when $N > M$.

\end{proof}

    \subsection{Bounds}
        \label{ap:bounds}
        \textbf{Theorem 3.5.} \emph{
            For $s \in \mathcal{S}$ let,
            \begin{equation*}
                \emaqdelta(s) = \max_{a \in \textnormal{Support}(\mu(\cdot|s))} \qmustar(s,a) - \mathds{E}_{\ain \sim \mu(\cdot|s)}[\max_{b\in \ain} \qmustar(s,b)]
            \end{equation*}
            The suboptimality of $\qmun$ can be upperbounded as follows,
            \begin{equation}
                \linf{\qmun - \qmustar} \leq \frac{\gamma}{1 - \gamma} \max_{s,a}
                    \mathds{E}_{s'} \Big[\emaqdelta(s')\Big] \leq \frac{\gamma}{1 - \gamma} \max_{s} \emaqdelta(s)
            \end{equation}
            The same also holds when $\qmustar$ is replaced with $\qmun$ in the definition of $\emaqdelta$.
        }
        \begin{proof}
The two versions where $\Delta(s)$ is defined in terms of $\qmun$ and $Q^*_\mu$ have very similar proofs.

\boxed{\textnormal{Version with }\qmun}

Let $\tql$ denote the backup operation in $Q$-Learning. Let $(\tql)^m = \underbrace{\tql \circ \tql \circ ... \circ \tql}_{m \textnormal{ times}}$. We know the following statements to be true:
\begin{align}
    & \qmun = \tmun \qmun = r(s,a) + \gamma \cdot \mathds{E}_{s'}\mathds{E}_{\ain \sim \mu(a'|s')}[\max_{\ain} \qmun(s',a')] \\
    & \tql \qmun = r(s,a) + \gamma \cdot \mathds{E}_{s'} \max_{a'} \qmun(s',a') \\
    & \lim_{m\rightarrow\infty} (\tql)^m \qmun = Q^* \\
    & \linf{(\tql)^{m+2} \qmun - (\tql)^{m+1} \qmun} \leq \gamma \cdot \linf{(\tql)^{m+1} \qmun - (\tql)^{m} \qmun} \\
    & \linf{(\tql)^{m+1} \qmun - (\tql)^{m} \qmun} \leq \gamma^{m} \cdot \linf{\tql \qmun - \qmun}
\end{align}
Putting these together we have that,
\begin{align}
    \linf{\qmun - Q^*} &\leq \sum_{m=0}^\infty \linf{(\tql)^{m+1} \qmun - (\tql)^{m} \qmun} \\
    &\leq \sum_{m=0}^\infty \gamma^m \cdot \linf{\tql\qmun - \qmun} \\
    &= \frac{1}{1 - \gamma} \linf{\tql\qmun - \qmun} \\
    &= \frac{1}{1 - \gamma} \max_{s,a} \Bigg\vert \Big(r(s,a) + \gamma \cdot \mathds{E}_{s'} \max_{a'} \qmun(s',a')\Big) \\
    &\qquad - \Big(r(s,a) + \gamma \cdot \mathds{E}_{s'}\mathds{E}_{\ain \sim \mu(a'|s')}[\max_{\ain} \qmun(s',a')]\Big) \Bigg\vert \\
    &= \frac{\gamma}{1 - \gamma} \maxsa{
        \mathds{E}_{s'} \Big[\max_{a'} \qmun(s',a') - \mathds{E}_{\ain \sim \mu(a'|s')}[\max_{\ain} \qmun(s',a')]\Big]
    } \\
    &\leq \frac{\gamma}{1 - \gamma} \max_{s'} \left\lvert \max_{a'} \qmun(s',a') - \mathds{E}_{\ain \sim \mu(a'|s')}[\max_{\ain} \qmun(s',a')] \right\rvert
\end{align}

\boxed{\textnormal{Version with }Q^*_\mu}

Very similarly we have,
\begin{align}
    \linf{\qmun - Q^*} &\leq \sum_{m=0}^\infty \linf{(\tmun)^{m+1} Q^* - (\tmun)^{m} Q^*} \\
    &\leq \sum_{m=0}^\infty \gamma^m \cdot \linf{\tmun Q^* - Q^*} \\
    &= \frac{1}{1 - \gamma} \linf{Q^* - \tmun Q^*} \\
    &= \frac{1}{1 - \gamma} \max_{s,a} \Bigg\vert \Big(r(s,a) + \gamma \cdot \mathds{E}_{s'} \max_{a'} Q^*(s',a')\Big) \\
    &\qquad - \Big(r(s,a) + \gamma \cdot \mathds{E}_{s'}\mathds{E}_{\ain \sim \mu(a'|s')}[\max_{\ain} Q^*(s',a')]\Big) \Bigg\vert \\
    &= \frac{\gamma}{1 - \gamma} \maxsa{
        \mathds{E}_{s'} \Big[\max_{a'} Q^*(s',a') - \mathds{E}_{\ain \sim \mu(a'|s')}[\max_{\ain} Q^*(s',a')]\Big]
    } \\
    &\leq \frac{\gamma}{1 - \gamma} \max_{s'} \left\lvert \max_{a'} Q^*(s',a') - \mathds{E}_{\ain \sim \mu(a'|s')}[\max_{\ain} Q^*(s',a')] \right\rvert
\end{align}

\end{proof}
    
    \begin{corollary}
        Let $V_\mu, Q_\mu, A_\mu$ denote the value, Q, and advantage functions of $\mu$ respectively. When $N = 1$ we have that,
        \begin{align}
            \linf{Q_\mu - Q^*} &\leq \frac{\gamma}{1 - \gamma} \max_{s'} \left\lvert \max_{a'}Q_\mu(s',a') - \mathds{E}_{a' \sim \mu(a'|s')}[Q_\mu(s',a')] \right\rvert \\
            &= \frac{\gamma}{1 - \gamma} \max_{s'} \left\lvert \max_{a'}Q_\mu(s',a') - V_\mu(s') \right\rvert \\
            &= \frac{\gamma}{1 - \gamma} \max_{s',a'} A_\mu(s',a')
        \end{align}
        It is interesting how the sub-optimality can be upper-bounded in terms of a policy's own advantage function.
    \end{corollary}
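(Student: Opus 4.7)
The plan is to derive this corollary as a direct specialization of Theorem~\ref{theorem:bounds} (in its $Q^N_\mu$-version of $\Delta$) to $N = 1$, combined with the identification $Q^1_\mu = Q_\mu$ from Theorem~\ref{theorem:limit}. Since Appendix~A operates under the standing assumption that $\mu$ has full support (so that $Q^*_\mu = Q^*$ and the support-restricted $\max$ in $\Delta(s)$ coincides with an unrestricted $\max_a$), the bound of Theorem~\ref{theorem:bounds} applies directly with no additional work.

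First I would substitute $N = 1$ into Theorem~\ref{theorem:bounds}. The sample $\{a_i\}^1 \sim \mu(\cdot|s)$ is just a single action $a$, so $\max_{b \in \{a_i\}^1} Q^1_\mu(s,b)$ collapses to $Q^1_\mu(s,a) = Q_\mu(s,a)$, and the inner expectation simplifies to $\mathds{E}_{a \sim \mu(\cdot|s)}[Q_\mu(s,a)]$. Using $Q^1_\mu = Q_\mu$ on the left-hand side then yields the first inequality of the corollary.

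Next I would invoke the definition $V_\mu(s) := \mathds{E}_{a \sim \mu(\cdot|s)}[Q_\mu(s,a)]$ to rewrite the expectation as $V_\mu(s')$, which produces the second displayed expression directly. For the final equality, I would pull the $a'$-independent term $V_\mu(s')$ inside the maximum to obtain $\max_{a'} Q_\mu(s',a') - V_\mu(s') = \max_{a'}(Q_\mu(s',a') - V_\mu(s')) = \max_{a'} A_\mu(s',a')$, and then observe that since $V_\mu(s')$ is a $\mu(\cdot|s')$-average of $Q_\mu(s',\cdot)$ it cannot exceed the maximum, so the quantity inside the absolute value is already non-negative and the bars may be dropped. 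A trivial commutation $\max_{s'} \max_{a'} = \max_{s',a'}$ finishes the proof.

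There is no real obstacle here, since the statement is essentially a rewriting of Theorem~\ref{theorem:bounds} at $N = 1$ in a more suggestive form. The only things worth double-checking are (a) that the absolute value can be dropped, which is immediate from the inequality ``average $\leq$ max'', and (b) that the support-restricted $\max$ appearing implicitly in $\Delta$ matches the unrestricted $\max_{a'}$ used in the corollary, which holds under the appendix's full-support convention on $\mu$ (or, equivalently, by working in the restricted MDP $M_\mu$ defined at the start of Appendix~\ref{ap:proofs}).
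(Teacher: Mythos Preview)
Your proposal is correct and matches the paper's treatment. The paper does not give a separate proof for this corollary: the derivation is presented inline as the chain of displayed equalities in the statement itself, regarded as an immediate specialization of the $Q^N_\mu$-version of Theorem~\ref{theorem:bounds} at $N=1$ together with $Q^1_\mu = Q_\mu$, which is precisely the route you describe (including the observation that the absolute value may be dropped because the average cannot exceed the max).
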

    
    \begin{corollary}
        \textbf{(Proof for second part of Theorem \ref{theorem:limit})}
    \end{corollary}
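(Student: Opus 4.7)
The plan is to read off the conclusion directly from Theorem~3.5, so almost all of the work is already done and the corollary reduces to bounding $\max_s \Delta(s)$ as $N\to\infty$. Concretely, take the version of Theorem~3.5 in which $\Delta$ is defined using $Q^*_\mu$ itself:
\[
\|Q^N_\mu - Q^*_\mu\|_\infty \;\le\; \frac{\gamma}{1-\gamma}\,\max_{s}\Delta(s),
\qquad
\Delta(s) = V^*_\mu(s) - \mathbb{E}_{\{a_i\}^N\sim\mu(\cdot|s)}\bigl[\max_{b\in\{a_i\}^N}Q^*_\mu(s,b)\bigr],
\]
where $V^*_\mu(s):=\max_{a\in\mathrm{Support}(\mu(\cdot|s))}Q^*_\mu(s,a)$. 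So it suffices to show $\max_s\Delta(s)\to 0$.

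The key observation is a dichotomy on the $N$ iid samples from $\mu(\cdot|s)$. Let $E_s^N$ be the event that at least one of the $N$ draws lands in $\mathrm{Support}(\pi^*_\mu(\cdot|s))$. On $E_s^N$, that particular sample attains $Q^*_\mu$-value equal to $V^*_\mu(s)$, so $\max_b Q^*_\mu(s,b)=V^*_\mu(s)$. On $E_s^{N,c}$, we only have the trivial lower bound $\max_b Q^*_\mu(s,b)\ge -C$, where $C=\|Q^*_\mu\|_\infty<\infty$ because rewards are bounded and $\gamma<1$. Combining these,
\[
\mathbb{E}\bigl[\max_b Q^*_\mu(s,b)\bigr] \;\ge\; V^*_\mu(s)\bigl(1-(1-\mu^*(s))^N\bigr) - C\,(1-\mu^*(s))^N,
\]
so $\Delta(s)\le (V^*_\mu(s)+C)(1-\mu^*(s))^N\le 2C\,(1-\mu^*(s))^N$. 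The uniform-lower-bound hypothesis $\mu_{\min}:=\inf_s\mu^*(s)>0$ then gives $\max_s\Delta(s)\le 2C(1-\mu_{\min})^N\to 0$, which plugged back into the Theorem~3.5 bound yields $\|Q^N_\mu-Q^*_\mu\|_\infty\to 0$ geometrically in $N$.

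The only genuinely substantive step is the dichotomy argument above; the rest is just rearrangement. I expect the main obstacle, if any, to be making the measure-theoretic bookkeeping clean: namely, justifying that the samples landing in $\mathrm{Support}(\pi^*_\mu(\cdot|s))$ really do achieve $V^*_\mu(s)$, which follows because $\pi^*_\mu$ is supported on $\arg\max_{a\in\mathrm{Support}(\mu(\cdot|s))}Q^*_\mu(s,a)$ by optimality, and ensuring that the event probability $\mu^*(s)$ is computed with respect to the same reference measure used to define the support. Both hypotheses of the corollary are used exactly once: bounded rewards give the constant $C$, and $\inf_s\mu^*(s)>0$ is needed so that the geometric tail $(1-\mu^*(s))^N$ decays \emph{uniformly} in $s$, which is what converts pointwise convergence of $\Delta(s)$ into convergence of $\max_s\Delta(s)$ and hence into $\mathcal{L}_\infty$ convergence of $Q^N_\mu$.
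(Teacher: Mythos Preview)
Your proof is correct and follows essentially the same route as the paper's: both invoke the $Q^*_\mu$-version of Theorem~3.5, split on the event that at least one of the $N$ samples lands in the optimal-action set, use bounded rewards to control the contribution on the complement, and finish with the uniform geometric tail $(1-\inf_s\mu^*(s))^N\to 0$. The only cosmetic difference is that the paper writes the $Q$-value bounds as $\alpha=\ell/(1-\gamma)$ and $\beta=L/(1-\gamma)$ from the reward range, whereas you package them into $C=\|Q^*_\mu\|_\infty$; the resulting inequalities are identical in substance.
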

    \begin{proof}

    We want to show $\lim_{N\rightarrow\infty}\qmun = Q^*$. More exactly, what we seek to show is the following,
    \begin{align}
        \lim_{N\rightarrow\infty} \linf{\qmun - Q^*} = 0
    \end{align}
    or,
    \begin{align}
        \forall \epsilon > 0, \exists N, \mbox{ s.t. } \forall M \geq N, \linf{\qmun - Q^*} < \epsilon
    \end{align}
    Let $\epsilon > 0$. Recall,
    \begin{align}
        \emaqdelta(s) = \max_{a \in \textnormal{Support}(\mu(\cdot|s))} \qmustar(s,a) - \mathds{E}_{\ain \sim \mu(\cdot|s)}[\max_{b\in \ain} \qmustar(s,b)]
    \end{align}
    Let $\inf_s \mu^*(s) = p > 0$. Let the lower and upper bounds of rewards be $\ell$ and $L$, and let $\alpha = \frac{1}{1-\gamma}\ell$ and $\beta = \frac{1}{1-\gamma}L$. We have that,
    \begin{align}
        \mathds{E}_{\ain \sim \mu(\cdot|s)}[\max_{b\in \ain} \qmustar(s,b)]
        &\geq (1 - p)^N \cdot \alpha + (1 - (1 - p)^N) \cdot \max_{a \in \textnormal{Support}(\mu(\cdot|s))} \qmustar(s,a)
    \end{align}
    Hence $\forall s$,
    \begin{align}
        \emaqdelta(s) &\leq (1 - p)^N \cdot \max_{a \in \textnormal{Support}(\mu(\cdot|s))} \qmustar(s,a) - (1 - p)^N \cdot \alpha\\
        &= (1 - p)^N \cdot \Big(\max_{a \in \textnormal{Support}(\mu(\cdot|s))}\qmustar(s,a) - \alpha\Big)\\
        &\leq (1 - p)^N \cdot \Big(\beta - \alpha\Big)
    \end{align}
    Thus, for large enough $N$ we have that,
    \begin{align}
        \linf{\qmun - \qmustar} \leq \frac{\gamma}{1 - \gamma} \max_{s} \emaqdelta(s) < \epsilon
    \end{align}
    concluding the proof.
\end{proof}

\section{Autoregressive Generative Model}
    \label{ap:autoregressive}
    The architecture for our autoregressive generative model is inspired by the works of \citep{metz2017discrete, van2020q, germain2015made}. Given a state-action pair from the dataset $(s,a)$, first an MLP produces a $d$-dimensional embedding for $s$, which we will denote by $h$. Below, we use the notation $a_i$ to denote the $i^{th}$ index of $a$, and $a_{[:i]}$ to represent a slice from first up to and \textbf{not including} the $i^{th}$ index, where indexing begins at 0. We use a discretization in each action dimension. Thus, we discretize the range of each action dimension into $N$ uniformly sized bins, and represent $a$ by the labels of the bins. Let $\ell_i$ denote the label of the $i^{th}$ action index.

\paragraph{Training}
    We use separate MLPs per action dimension. Each MLP takes in the $d$-dimensional state embedding and ground-truth actions before that index, and outputs $N$ logits for the choice over bins. The probability of a given index's label is given by,
    \begin{align}
        p(\ell_i|s, a[:i]) = \mbox{SoftMax}\Big(\mbox{MLP}_i(d, a[:i])\Big)[\ell_i]
    \end{align}
    We use standard maximum-likelihood training (i.e. cross-entropy loss).

\paragraph{Sampling}
    Given a state $s$, to sample an action we again embed the state, and sample the action indices one-by-one.
    \begin{align}
        &p(\ell_0|s) = \mbox{SoftMax}\Big(\mbox{MLP}_i(d)\Big)[\ell_0]\\
        &\ell_0 \sim p(\ell_0|s), a_0 \sim \mbox{Uniform}(\mbox{Bin corresponding to }\ell_0)\\
        &p(\ell_i|s) = \mbox{SoftMax}\Big(\mbox{MLP}_i(d, a[:i])\Big)[\ell_i]\\
        &\ell_i \sim p(\ell_i|s, a[:i]), a_i \sim \mbox{Uniform}(\mbox{Bin corresponding to }\ell_i)
    \end{align}

\section{Algorithm Box}
    \label{ap:algorithm_box}
    \begin{algorithm2e}[h]
  \DontPrintSemicolon
  
  Offline dataset $\mathcal{D}$, Pretrain $\muas$ on $\mathcal{D}$\;
  Initialize $K$ Q functions with parameters $\theta_i$, and $K$ target Q functions with parameters $\theta^{\textnormal{target}}_i$\;
  $\texttt{Ensemble}$ parameter $\lambda$, Exponential moving average parameter $\alpha$\;\;
  
  \SetKwProg{Fn}{Function}{:}{}
  
  \SetKwFunction{fEnsemble}{Ensemble}
    \Fn{\fEnsemble{values}}{
        \KwRet $\lambda \cdot \min(values) + (1-\lambda) \cdot \max(values)$\;
    }

  \SetKwFunction{FMain}{$y_{\textnormal{target}}$}
  \Fn{\FMain{$s,a,s',r,t$}}{
        $\{a'_i\}^N \sim \mu(a'|s')$\;
        $Qvalues \gets [\quad]$\;
        \For{$k\gets1$ \KwTo $N$}{
            \tcc{Estimate the value of action $a'_k$}
            $Qvalues.\texttt{append}\Big(\fEnsemble\big([Q^{target}_i(s', a'_k) \textnormal{ for all } i]\big)\Big)$\;
        }
        \KwRet $r + (1-t) \cdot \gamma \max(Qvalues)$\;
  }
  \;
  \While{not converged}{
    Sample a batch $\{(s_m,a_m,s'_m,r_m,t_m)\}^M \sim \mathcal{D}$\;
    \For{$i=1,...,K$}{
        $\mathcal{L}(\theta_i) = \sum_m \Big(Q_i(s_m,a_m) - y_{\textnormal{target}}(s_m,a_m,s'_m,r_m,t_m)\Big)^2$\;
        $\theta_i \leftarrow \theta_i - \texttt{AdamUpdate}\Big(\mathcal{L}(\theta_i), \theta_i\Big)$\;
        $\theta^{\textnormal{target}}_i \leftarrow \alpha \cdot \theta^{\textnormal{target}}_i + (1 - \alpha) \cdot \theta_i$\;
    }
  }
  \caption{
    Full EMaQ Training Algorithm
  }
  \label{alg:full_alg}
\end{algorithm2e}

\section{Inconclusive Experiments}
    \label{ap:failures}
    \subsection{Updating the Proposal Distribution}
        Akin to the work of \citep{van2020q}, we considered maintaining a second proposal distribution $\tilde{\mu}$ that is updated to distill $\argmax_{\ain}Q(s,a)$, and sampling from the mixture of $\mu$ and $\tilde{\mu}$. In our experiments however, we did not observe noticeabel gains. This may potentially be due to the relative simplicity of the Mujoco benchmark domains, and may become more important in more challenging domains with more uniformly distributed $\muas$.

\section{Laundry List}
    \begin{itemize}
        \item Autoregressive models are slow to generate samples from and EMaQ needs to take many samples, so it was slower to train than the alternative methods. However, this may be addressed by better generative models and engineering effort.
    \end{itemize}

\section{Online RL}
    \label{sec:online_rl}
    EMaQ is also applicable to online RL setting. Combining strong offline RL methods with good exploration policies has the potential for producing highly sample-efficient online RL algorithms. Concretely, we refer to online RL as the setting where iteratively, a batch of $M$ environment steps with an exploration policy are interleaved with $M$ RL updates~\citep{levine2020offline,matsushima2020deployment}.
    
    EMaQ is designed to remain within the support of the provided training distribution.
    This however, is problematic for online RL which requires good exploration interleaved with RL updates. To this end, first, we modify our autoregressive proposal distribution $\muas$ by dividing the logits of all softmaxes by $\tau > 1$. This has the effect of smoothing the $\muas$ distribution, and increasing the probability of sampling actions from the low-density regions and the boundaries of the support. Given this online proposal distribution, a criteria is required by which to choose amongst sampled actions. While there exists a rich literature on how to design effective RL exploration policies~\citep{weng_2020}, in this work we used a simple UCB-style exploration criterion~\citep{chen2017ucb} as follows:
        \begin{align}
            Q^{\mbox{explore}}(s,a) = \mbox{mean}\Big(\{Q_i(s,a)\}_K\Big) + \beta \cdot \mbox{std}\Big(\{Q_i(s,a)\}_K\Big)
        \end{align}
    Given $N$ sampled actions from the modified proposal distribution, we take the action with highest $Q^{\mbox{explore}}$. 
    
    We compare the online variant of EMaQ with entropy-constrained Soft Actor Critic (SAC) with automatic tuning of the temperature parameter~\citep{haarnoja2018soft}. 
    For EMaQ we swept the temperatures and used a fixed bin size of 40, 8 Q-function ensembles and $N=200$. For fairness of comparisons, we also ran SAC with similar sweeps over different collection batch sizes and number of Q-function ensembles.
    In the fully online setting (trajectory batch size 1, Figure \ref{fig:sac_vs_emaq_batch_1}), EMaQ is already competitive with SAC, and more excitingly, in the deployment-efficient setting\footnote{By deployment-efficient we mean that less number of different policies need to be executed in the environment, which may have substantial benefits for safety and otherwise constrained domains \citep{matsushima2020deployment}.} (trajectory batch size 50K, Figure \ref{fig:sac_vs_emaq_batch_50K}), EMaQ can outperform SAC\footnote{It must be noted that the online variant of EMaQ has more hyperparameters to tune, and the relative performance is dependent on these hyperparameters, while SAC with ensembles has the one extra ensemble mixing parameter $\lambda$ to tune.}. Figures \ref{fig:all_batch_1} and \ref{fig:all_batch_50K} present the results for all hyperparameter settings, for SAC and EMaQ, in the batch size $1$ and batch size $50K$ settings respectively.
    \textbf{In the fully online setting, EMaQ is already competitive with SAC, and more excitingly, in the deployment-efficient setting, EMaQ can outperform SAC.}
    
    \begin{figure}[t]
        \centering
        \begin{subfigure}{\textwidth}
            \centering
            \includegraphics[width=\textwidth]{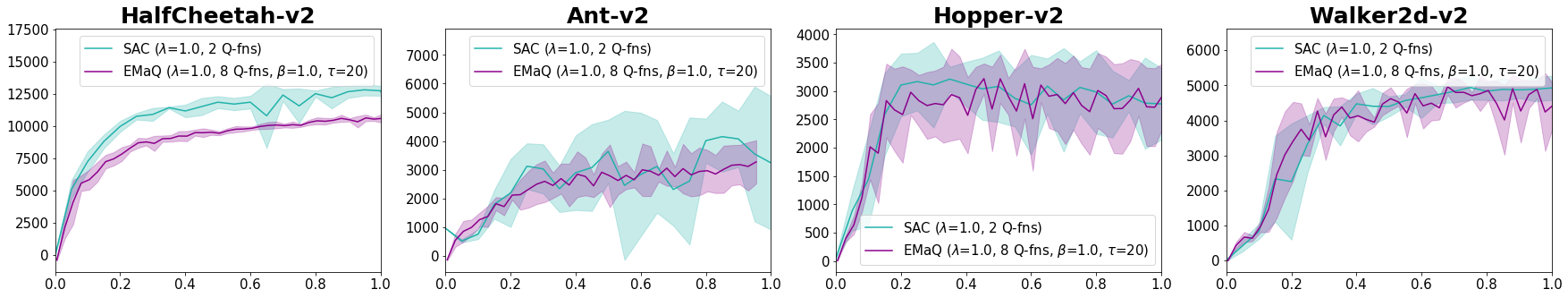}
            \caption{\small
                SAC vs. EMaQ, Trajectory Batch Size 1: For easier visual interpretration we plot a single hyperparameter setting of EMaQ that tended to perform well across the 4 domains considered. The hyperparameters considered were $N=200$, $\lambda=1.0$, $\beta=1.0$, $\tau \in \{1, 5, 10, 20\}$. SAC performed worse when using 8 Q-functions as in EMaQ. \texttt{x}-axis unit is $1$ million environment steps.
            }
            \label{fig:sac_vs_emaq_batch_1}
        \end{subfigure}
        
        \begin{subfigure}{\textwidth}
            \centering
            \includegraphics[width=\textwidth]{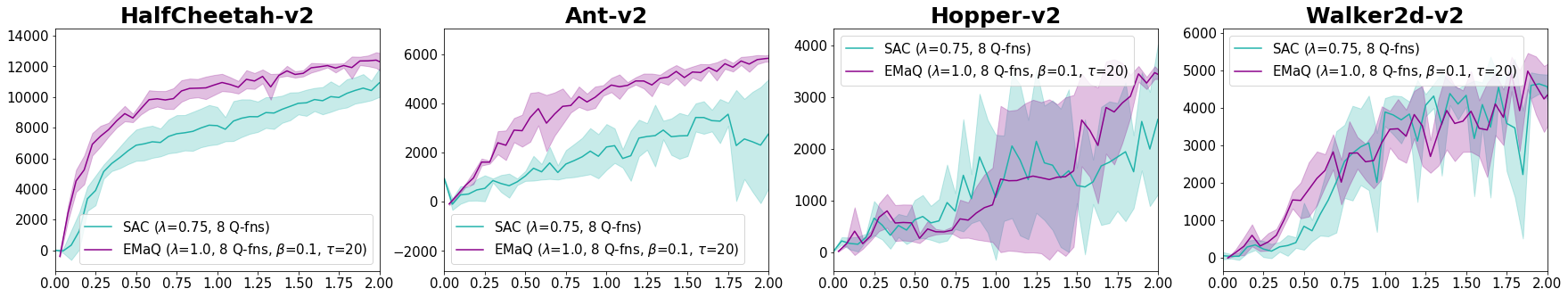}
            \caption{\small
                SAC vs. EMaQ, Trajectory Batch Size 50K: For easier visual interpretration we plot a single hyperparameter setting of EMaQ that tended to perform well across the 4 domains considered. The hyperparameters considered were $N=200$, $\lambda \in \{0.75, 1.0\}$, $\beta \in \{0.1, 1.0\}$, $\tau \in \{1, 5, 10, 20\}$. \texttt{x}-axis unit is $1$ million environment steps.
            }
            \label{fig:sac_vs_emaq_batch_50K}
        \end{subfigure}
        \caption{Online RL results under different trajectory batch sizes.}
        \label{fig:online_rl_main_fig}
    \end{figure}
    

        
        \begin{figure}
            \centering
            
            \begin{subfigure}{\textwidth}
                \centering
                \includegraphics[width=\textwidth]{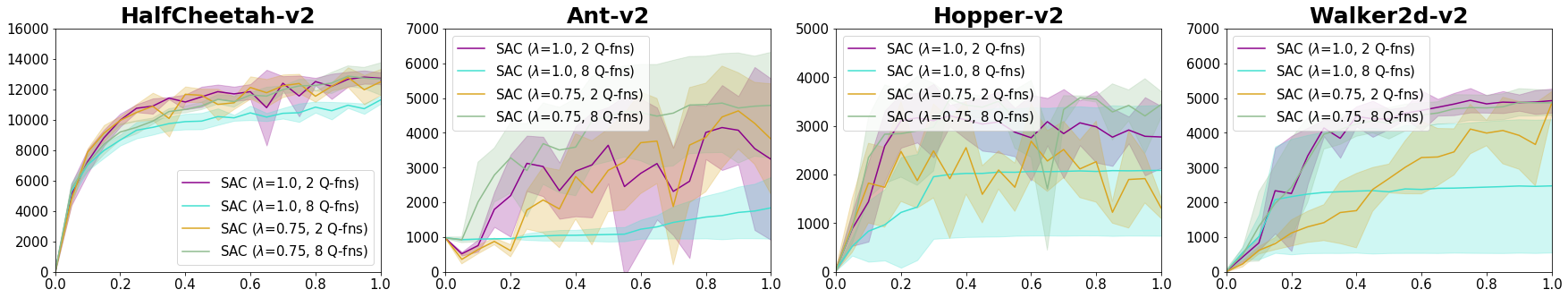}
                \caption{SAC batch 1 results}
            \end{subfigure}
            \begin{subfigure}{\textwidth}
                \centering
                \includegraphics[width=\textwidth]{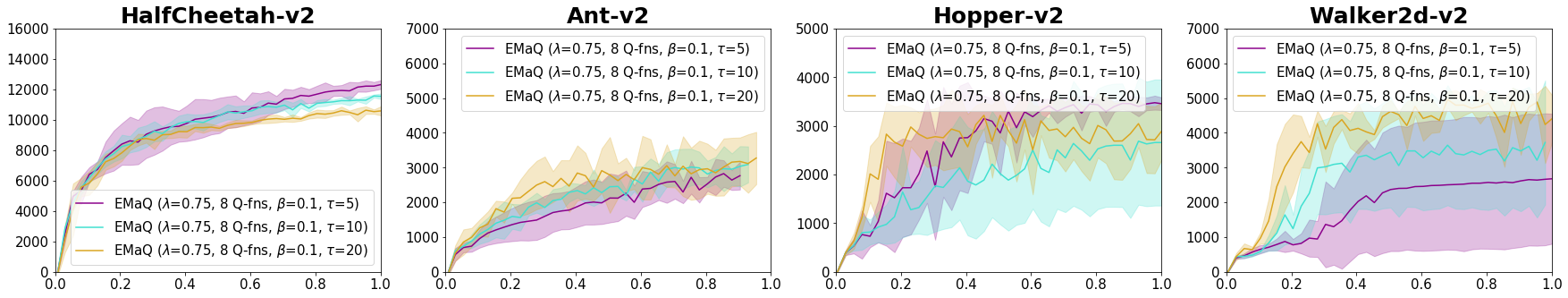}
                \caption{EMaQ batch 1 results}
            \end{subfigure}
            
            \caption{All results for batch size 1}
            \label{fig:all_batch_1}
        \end{figure}
        
        \begin{figure}
            \centering
            
            \begin{subfigure}{\textwidth}
                \centering
                \includegraphics[width=\textwidth]{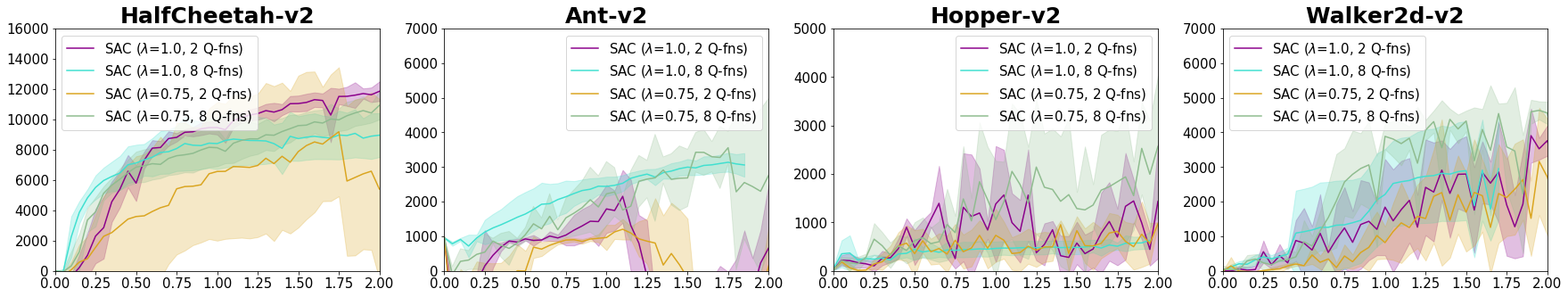}
                \caption{SAC batch 50K results}
            \end{subfigure}
            \begin{subfigure}{\textwidth}
                \centering
                \includegraphics[width=\textwidth]{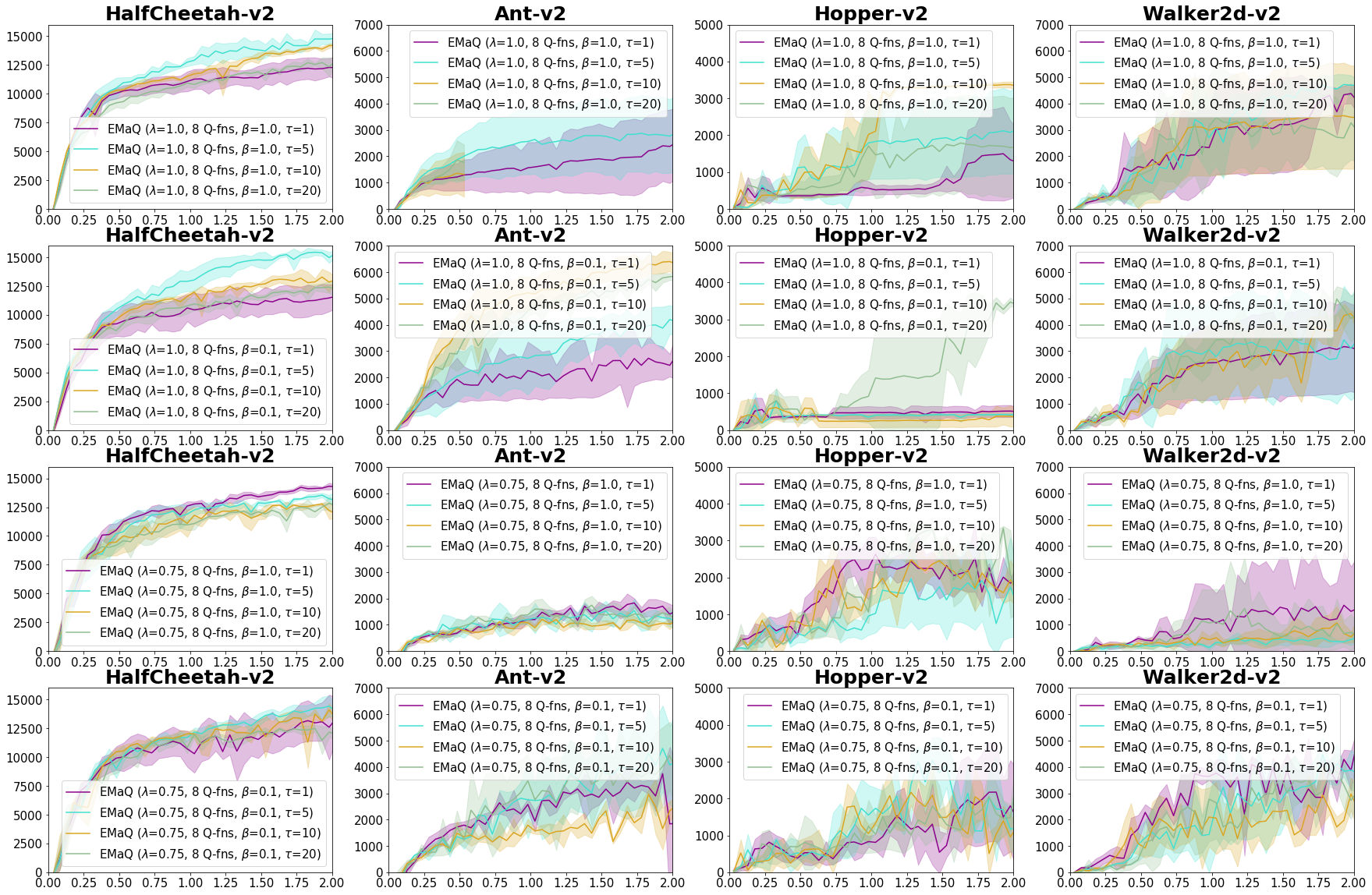}
                \caption{EMaQ batch 50K results}
            \end{subfigure}
            
            \caption{All results for batch size 50K}
            \label{fig:all_batch_50K}
        \end{figure}


    
\section{Offline RL Experimental Details}
    \label{ap:exp_details}
    For each environment and data setting, we train an autoregressive model -- as described above -- on the provided data with 2 random seeds. These generative models are then frozen, and used by the downstream algorithms (EMaQ, BEAR, and BCQ) as the base behavior policy ($\muas$ in EMaQ)\footnote{While in the original presentation of BCQ and BEAR the behvior policy is learned online, there is technically no reason for this to be the case, and in theory both methods should benefit from this pretraining}.
    \subsection{Comparing Offline RL Methods}
        \label{ap:expmujoco}
        Following the bechmarking efforts of \citep{wu2019behavior}, the range of clipping factor considered for BCQ was $\Phi \in \{0.005, 0.015, 0.05, 0.15, 0.5\}$, and the range of target divergence value considered for BEAR was $\epsilon \in \{0.015, 0.05, 0.15, 0.5, 1.5\}$. For both methods, the larger the value of the hyperparameter is, the more the learned policy is allowed to deviate from the $\muas$.
        
        The rest of the hyperparameters use can be found in Table \ref{fig:mujoco_hypers}. The autoregressive models have the following architecture sizes (refer to Appendix \ref{ap:autoregressive} for description of the models used). The state embedding MLP consists of 2 hidden layers of dimension 750 with relu activations, followed by a linear embedding into a 750 dimensional state representation. The individual MLP for each action dimension consist of 3 hidden layers of dimension 256 with relu activations. Each action dimension is discretized into 40 equally sized bins.
        
        \begin{table}[h]
            \centering
            \begin{tabular}{|c|c|}
                \hline
                \multicolumn{2}{|c|}{\textbf{Shared Hyperparameters}} \\ \hline
                $\lambda$ & 1.0 \\
                Batch Size & 256 \\
                Num Updates & 1e6 \\
                Num $Q$ Functions & 8 \\
                $Q$ Architecture & MLP, 3 layers, 750 hid dim, relu \\
                $\mu$ lr & 5e-4\\
                $\alpha$ & 0.995 \\ \hline
                \multicolumn{2}{|c|}{\textbf{EMaQ Hyperparameters}} \\ \hline
                $Q$ lr & 1e-4\\ \hline
                \multicolumn{2}{|c|}{\textbf{BEAR Hyperparameters}} \\ \hline
                $\pi$ Architecture & MLP, 3 layers, 750 hid dim, relu \\
                $Q$ lr & 1e-3 \\
                $\pi$ lr & 3e-5 \\ \hline
                \multicolumn{2}{|c|}{\textbf{BCQ Hyperparameters}} \\ \hline
                $\pi$ Architecture & MLP, 3 layers, 750 hid dim, relu \\
                $Q$ lr & 1e-4 \\
                $\pi$ lr & 5e-4 \\ \hline
            \end{tabular}
            \caption{Hyperparameters for Mujoco Experiments}
            \label{fig:mujoco_hypers}
        \end{table}
    
    \subsection{EMaQ Ablation Experiment}
        Hyperparameters are identical to those in Table \ref{fig:mujoco_hypers}, except batch size is $100$ and number of updates is $500$K.
    
        
    \subsection{Details for Table \ref{tab:complete_other_envs} Experiments}
        \label{ap:other_envs}
        \paragraph{Generative Model}
            The generative models used are almost identical to the description in Appendix \ref{ap:autoregressive}, with a slight modification that $\textnormal{MLP}_i(d, a[:i])$ is replace with $\textnormal{MLP}_i(d, \textnormal{Lin}_i(a[:i]))$ where $\textnormal{Lin}_i$ is a linear transformation. This change was not necessary for good performance; it was as architectural detail that we experimented with and did not revert prior generating Table \ref{tab:other_envs}. The model dimensions for each domain are shown in \ref{fig:table_exp_hypers} in the following format (state embedding MLP hidden size, state embedding MLP number of layers, action MLP hidden size, action MLP number of layers, Ouput size of $\textnormal{Lin}_i$, number of bins for action discretization). Increasing the number of discretization bins from 40 (value for standard Mujoco experiments) to 80 was the most important change. Output dimension of state-embedding MLP is the same as the hidden size.
        
        \paragraph{Hyperparameters}
            Table \ref{fig:table_exp_hypers} shows the hyperparameters used for the experiments in Table \ref{tab:complete_other_envs}.
            
            \begin{table}[h]
                \centering
                \begin{tabular}{|c|c|}
                    \hline
                    \multicolumn{2}{|c|}{\textbf{Shared Hyperparameters}} \\ \hline
                    $\lambda$ & 1.0 \\
                    Batch Size & 128 \\
                    Num Updates & 1e6 \\
                    Num $Q$ Functions & 16 \\
                    $Q$ Architecture & MLP, 4 layers, 256 hid dim, relu \\
                    $\alpha$ & 0.995 \\
                    $\mu$ lr & 5e-4\\
                    Kitchen $\mu$ Arch Params & $(256, 4, 128, 1, 128, 80)$ \\
                    Antmaze $\mu$ Arch Params & $(256, 4, 128, 1, 128, 80)$ \\
                    Adroit $\mu$ Arch Params & $(256, 4, 128, 1, 128, 80)$ \\ \hline
                    \multicolumn{2}{|c|}{\textbf{EMaQ Hyperparameters}} \\ \hline
                    $Q$ lr & 1e-4\\
                    Kitchen N's Searched & $\{4, 8, 16, 32, 64\}$ \\
                    Antmaze N's Searched & $\{50, 100, 150, 200\}$\\
                    Adroit N's Searched & $\{16, 32, 64, 128\}$\\
                    \hline
                    \multicolumn{2}{|c|}{\textbf{BEAR Hyperparameters}} \\ \hline
                    $\pi$ Architecture & MLP, 4 layers, 256 hid dim, relu \\
                    $Q$ lr & 1e-4 \\
                    $\pi$ lr & 5e-4 \\ \hline
                    \multicolumn{2}{|c|}{\textbf{BCQ Hyperparameters}} \\ \hline
                    $\pi$ Architecture & MLP, 4 layers, 256 hid dim, relu \\
                    $Q$ lr & 1e-4 \\
                    $\pi$ lr & 5e-4 \\ \hline
                \end{tabular}
                \caption{Hyperparameters for Table \ref{tab:complete_other_envs} Experiments}
                \label{fig:table_exp_hypers}
            \end{table}

\section{VAE Results}
    \label{ap:vae_results}
    \subsection{Implementation}
        We also ran experiments with VAE parameterizations for $\muas$. To be approximately matched in parameter count with our autoregressive models, the encoder and decoder both have 3 hidden layers of size 1024 with relu activations. The dimension of the latent space was twice the number of action dimensions.
        The decoder outputs a vector $v$ which, and the decoder action distribution is defined to be $\mathcal{N}(\mbox{Tanh}(v), I)$. When sampling from the VAE, following prior work, samples from the VAE prior (spherical normal distribution) were clipped to the range $[-0.5, 0.5]$ and mean of the decoder distibution was used (i.e. the decoder distribution was not sampled from). The KL divergence loss term was weighted by 0.5. This VAE implementation was the one used in the benchmarking codebase of \citep{wu2019behavior}, so we did not modify it.
    \subsection{Results}
        \begin{figure}[t]
            \centering
            \makebox[\textwidth][c]{
                \includegraphics[width=1.2\textwidth]{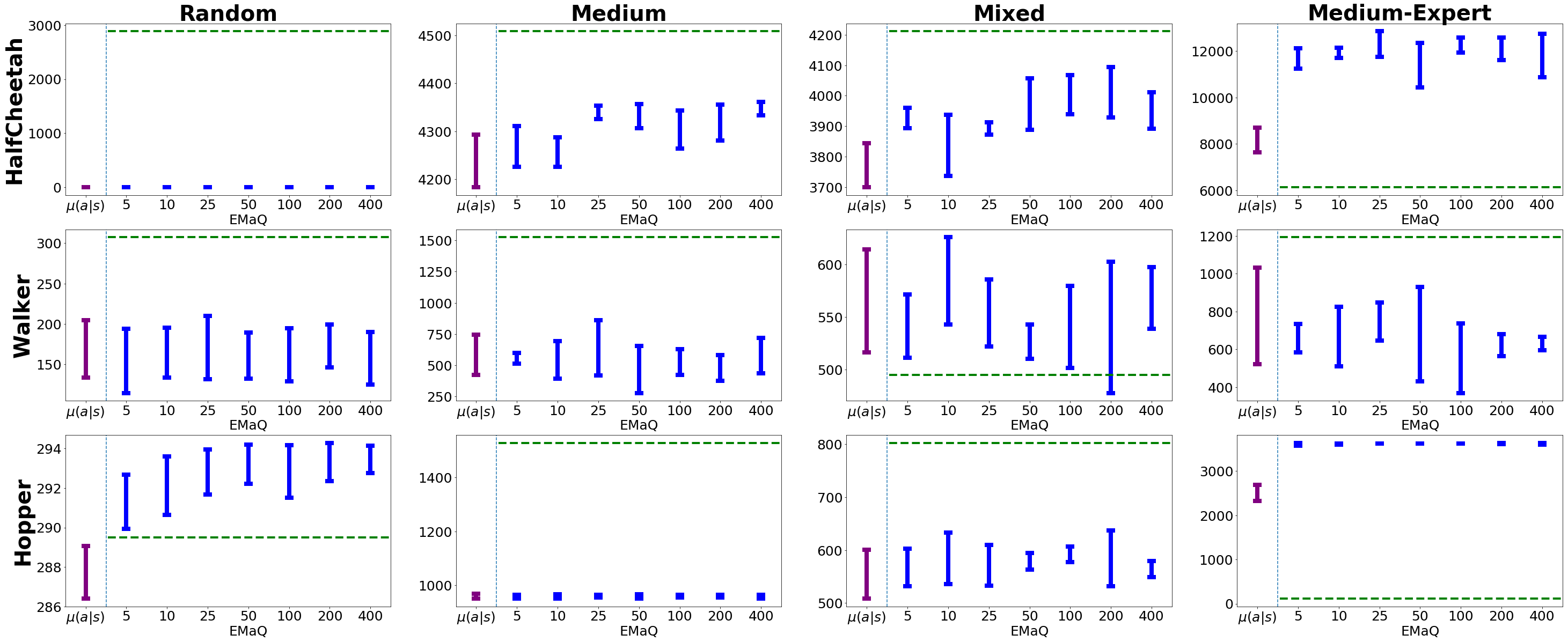}
            }
            \caption{
                \small
                Results for evaluating EMaQ on D4RL \citep{fu2020datasets} benchmark domains when using the described VAE implementation, with $N \in \{5, 10, 25, 50, 100, 200, 400\}$. Values above $\muas$ represent the result of evaluating the base behavior policies. Horizontal green lines represent the reported performance of BEAR in the D4RL benchmark (apples to apples comparisons in Figure \ref{fig:vae_mujoco}).
                \vspace{-0.5cm}
            }
            \label{fig:vae_ablation}
        \end{figure}
        \begin{figure}[t]
            \centering
            \makebox[\textwidth][c]{
                \includegraphics[width=1.2\textwidth]{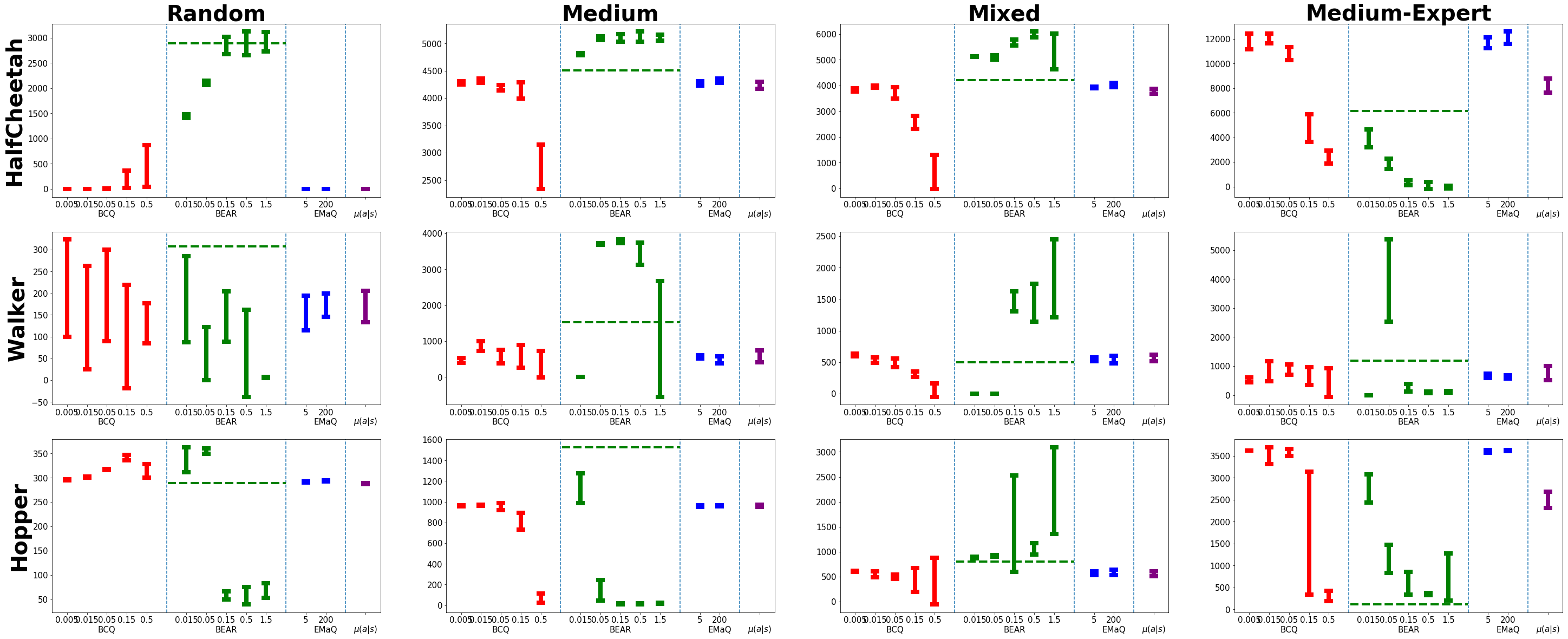}
            }
            \caption{
                \small
                Comparison of EMaQ, BCQ, and BEAR on D4RL \citep{fu2020datasets} benchmark domains when using when using the described VAE implementation for $\muas$. For both BCQ and BEAR, from left to right the allowed deviation from $\muas$ increases. Horizontal green lines represent the reported performance of BEAR in the D4RL benchmark.
                \vspace{-0.3cm}
            }
            \label{fig:vae_mujoco}
        \end{figure}
        As can be seen in Figure \ref{fig:vae_ablation}, EMaQ has a harder time improving upon $\muas$ when using the VAE architecture described above.
        However, as can be seen in Figure \ref{fig:vae_mujoco}, BCQ and BEAR do show some variability as well when switching to the VAEs. Since as an algorithm EMaQ is much more reliant on $\muas$, our hypothesis is that if it is true that the autoregressive models better captured the action distribution, letting EMaQ not make poor generalizations to out-of-distribution actions. Figures \ref{fig:ablation_together} and \ref{fig:mujoco_together} show autoregressive and VAE results side-by-side for easier comparison.
        \begin{sidewaysfigure}[h]
            \centering
            \makebox[\textwidth][c]{
                \includegraphics[width=1.2\textwidth]{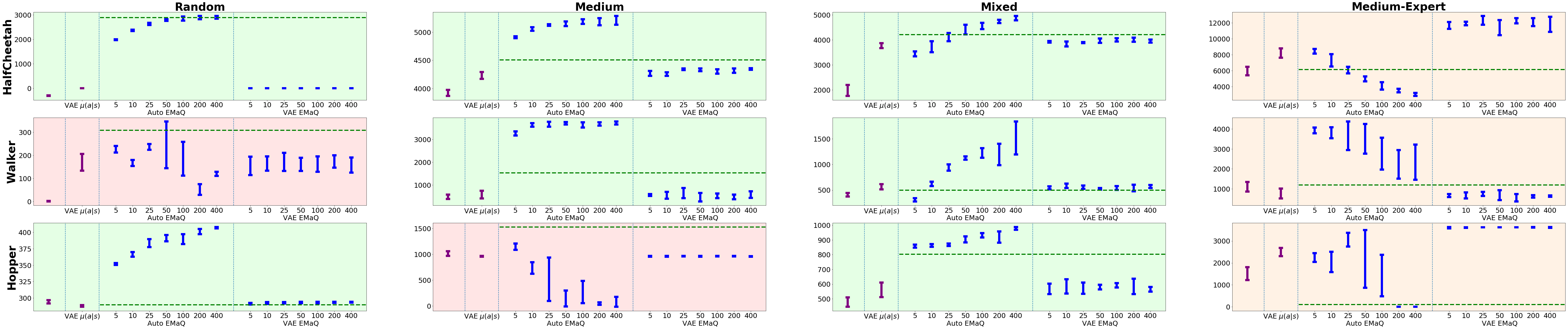}
            }
            \caption{
                Results with both autoregressive and VAE models in one plot for easier comparison.
            }
            \label{fig:ablation_together}
        \end{sidewaysfigure}
        \begin{sidewaysfigure}[h]
            \centering
            \makebox[\textwidth][c]{
                \includegraphics[width=1.2\textwidth]{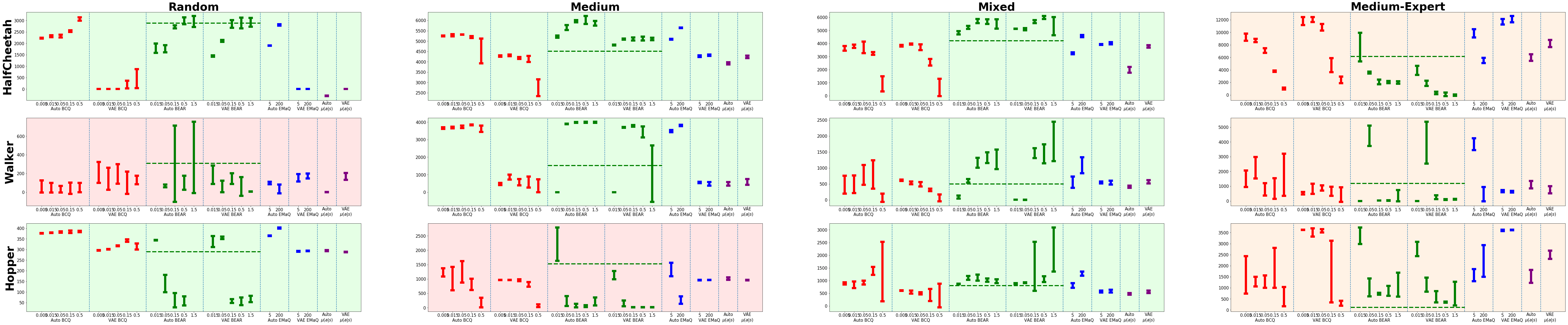}
            }
            \caption{
                Results with both autoregressive and VAE models in one plot for easier comparison.
            }
            \label{fig:mujoco_together}
        \end{sidewaysfigure}

\section{EMaQ Medium-Expert Setting Results}
    \label{ap:bespoke_medium_expert}
    In HalfCheetah, increasing $N$ significantly slows down the convergence rate of the training curves; while large $N$s continue to improve, we were unable to train them long enough for convergence. In Walker, for EMaQ, BCQ, and most hyperparameter settings of BEAR, training curves have a prototypical shape of a hump, where performance improves up to a certain high value, and then continues to fall very low. In Hopper, for higher values of $N$ in EMaQ we observed that increasing batch size from 100 to 256 largely resolved the poor performance, but for consistency we did not alter Figure \ref{fig:ablation} with these values.

\section{Comparison with Softmax Backup Operators}
    We thank anonymous reviewer for the motivation for this section. For clarity of writing, we will write the forms for deterministic dynamics and remove the expectations over the next state.
    
    An interesting connection to our proposed backup operators would be the following Softmax backup operator with similarities to EMaQ,
    \begin{align}
        \mathcal{T}^{\alpha}_{\mu} Q(s,a) &:= r(s,a) + \mathds{E}_{soft(a'|s')}[Q(s',a')] \\
        soft(a|s) &\propto \muas \cdot \exp(\alpha \cdot Q(s,a))
    \end{align}
    As suggested by our reviewer, the policy corresponding to $soft(a|s)$ is a policy that aims to maximize Q-values, subject to a KL-constraint between itself and the policy $\muas$. The looser the constraint, the larger the effective $\alpha$ and the farther the policy will be from $\mu$.
    One approach to Monte Carlo estimation of the expectation on the right hand side could be to take samples using methods from the energy-based generative modelling literature.
    
    An alternative approach which will more closely resembles EMaQ is to use self-normalized importance sampling,
    \begin{align}
        \mathcal{T}^{\alpha}_{\mu} Q(s,a) &:= r(s,a) + \mathds{E}_{soft(a'|s')}[Q(s',a')] \\
        &= r(s,a) + \sum_{\ain \sim \mu(a'|s')} w_i \cdot Q(s',a') \\
        \tilde{w}_i &= \frac{\mu(a'_i|s') \cdot \exp(\alpha \cdot Q(s',a'_i))}{\mu(a'_i|s')} \\
        w_i &= \frac{\tilde{w}_i}{\sum \tilde{w_i}}\\
        &= softmax(\alpha \cdot Q(s',a'_i))[i] \label{eq:softmax}
    \end{align}
    In this form, the soft backup is similar to EMaQ, where instead of taking ths max Q-value over the N samples, we take an average over the N Q-values, weighted by the softmax probabilities in equation \ref{eq:softmax}.
    For a given N, the $\alpha=0$ would be equivalent to Q-evaluation of the policy $\muas$, and as $\alpha \rightarrow \infty$, the soft backups approach EMaQ backups.
    
    In Figure \ref{fig:soft_emaq} we present empirical results with the soft backup operators, under a large range $\alpha \in \{1,4,8,16,32,64,128,256,512,1024\}$, in the Halfcheetah settings. The EMaQ and soft-EMaQ were run with the same architectures, but were smaller than the ones used for the results in the main text. We used the same checkpoints of the generative models as for the results in the main text. The test-time policy for both approaches is the same, sampling $N$ actions and taking the argmax action under the ensemble Q-value. The only difference between the EMaQ and soft-EMaQ implementations was a one-line change to replace \texttt{max} with a softmax average of the Q-values.
    
    Some interesting observations are the following: As anticipated, the soft EMaQ backups approach EMaQ as the value of $\alpha$ is increased. However, the necessary value of $\alpha$ to match the performance of EMaQ can be quite large. In the medium-expert setting, where figure \ref{fig:ablation} suggests challenges arising from the combination of large $N$s and function-approximators, we did not gain much advantage from soft backups, and only $\alpha \in \{8,16,32\}$ seem to have provided some mitigation of the problem for $N=25$. Since the soft backup introduces an additional hyperparameter that cannot be determined ahead of time, and does not seem to provide an advantage (at least in the limited Halfcheetah settings considered), from a practical perspective, we would prefer to use the regular EMaQ backup.
    
    \begin{sidewaysfigure}[h]
        \centering
        \includegraphics[width=\textwidth]{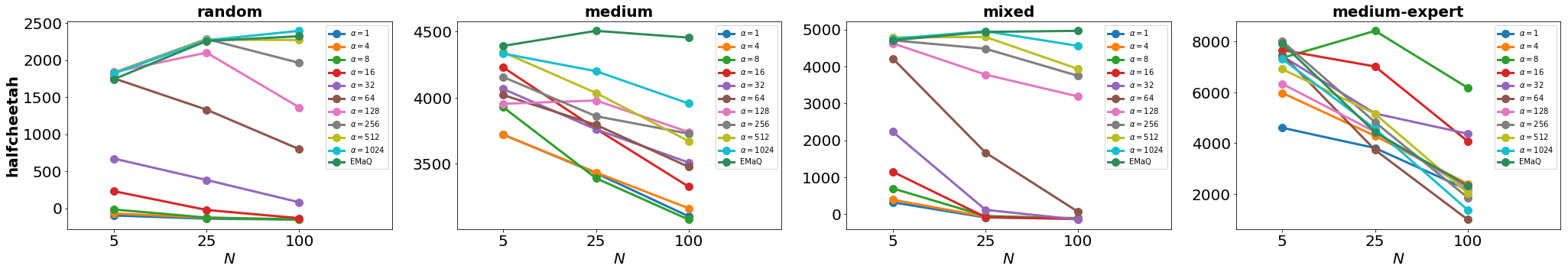}
        \caption{Comparison of Soft-EMaQ with EMaQ under a large range of hyperparameters in the Halfcheetah domain.}
        \label{fig:soft_emaq}
    \end{sidewaysfigure}
    
\section{Qualitative Differences in Training Curves}
        We have sometimes observed that the curves representing agent performance throughout training can be significantly more stable under EMaQ in comparison to BEAR and BCQ. A domain where the differences are particularly striking are the \texttt{antmaze-umaze} and \texttt{antmaze-umaze-diverse} domains. In figure \ref{tab:antmaze_curves} we have included plots of agent performance during training under the variety of considered hyperparameters and random seeds. It can be seen that in these two domains, initially the BCQ agents improves in performance close to the performance of EMaQ, and the drastically degrades with more training. In constrast, EMaQ agents remain stable even after twice as many training iterations as BCQ, which may indicate the downside of the heuristic perturbation model for constraining actions.
    \begin{table}[h]
        \centering
        \begin{tabular}{ccc}
             & EMaQ & BCQ \\
            \texttt{antmaze-umaze} & \includegraphics[width=0.3\textwidth]{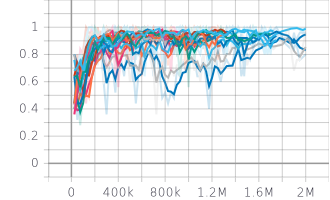} & \includegraphics[width=0.3\textwidth]{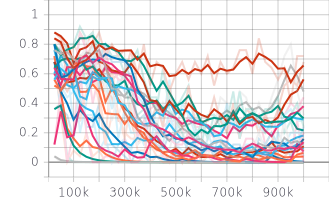}\\
            \texttt{antmaze-umaze-diverse} & \includegraphics[width=0.3\textwidth]{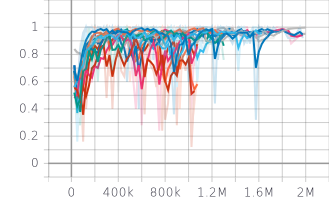} & \includegraphics[width=0.3\textwidth]{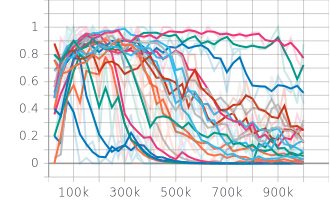}
        \end{tabular}
        \caption{Comparison of agent returns throughout training, under the variety of hyperparameters and  and random seeds, in the small ant domains. We observe that EMaQ is significantly more stable than BCQ in these domains, even though the values of $N$ in EMaQ were fairly large for these plots $N \in \{50,100,150,200\}$.}
        \label{tab:antmaze_curves}
    \end{table}

\section{Larger Plots for Visibility}
    \label{ap:larger_plots}
    Due to larger size of plots, each plot is shown on a separate page below. For ablation results, see Figure~\ref{fig:bigablationplots}. For MuJoCo results, see Figure~\ref{fig:bigmujocoplots}.
        \begin{sidewaysfigure}[h]
            \centering
            \makebox[\textwidth][c]{
                \includegraphics[width=\textwidth]{figs/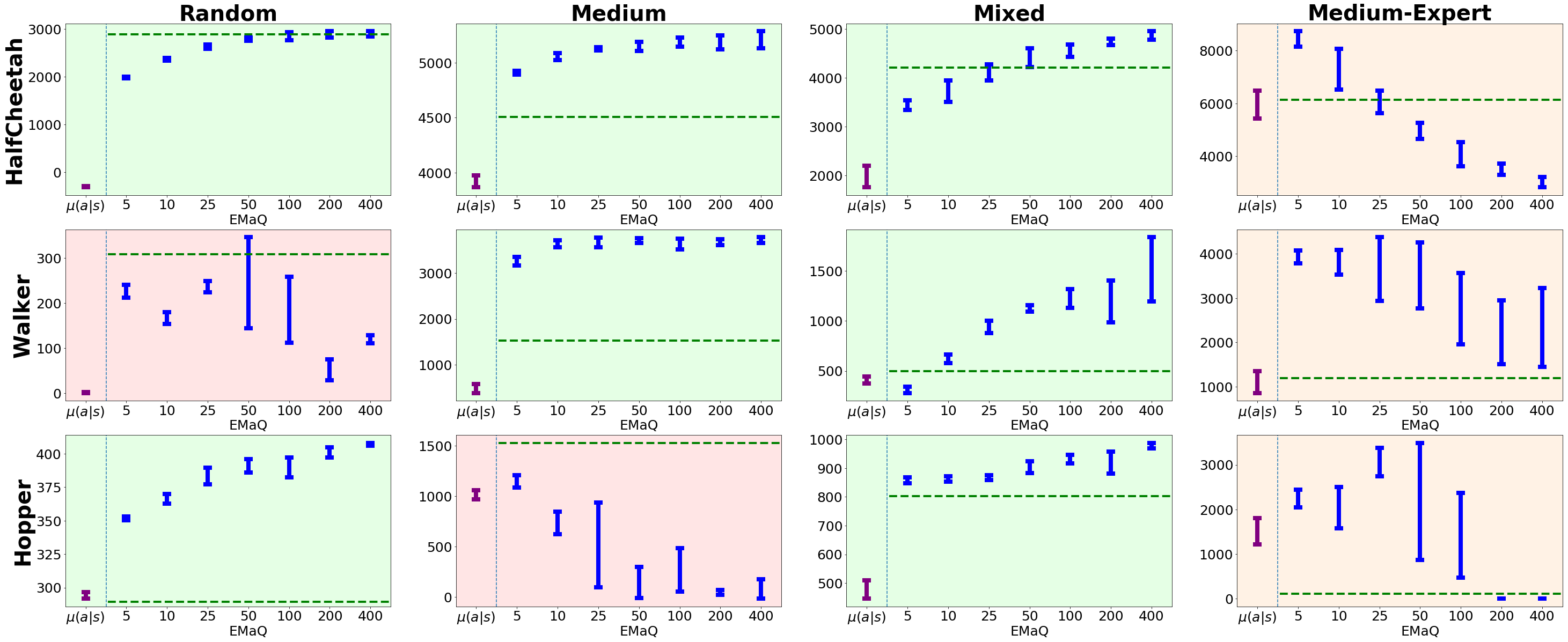}
            }
            \caption{
                Results for evaluating EMaQ on D4RL \citep{fu2020datasets} benchmark domains, with $N \in \{5, 10, 25, 50, 100, 200, 400\}$. Values above $\muas$ represent the result of evaluating the base behavior policies. Horizontal green lines represent the reported performance of BEAR in the D4RL benchmark (apples to apples comparisons in Figure \ref{fig:mujocoplots}). Refer to main text (Section \ref{sec:ablation}) for description of color-coding.
            }
            \label{fig:bigablationplots}
        \end{sidewaysfigure}
        \begin{sidewaysfigure}[h]
            \centering
            \makebox[\textwidth][c]{
                \includegraphics[width=\textwidth]{figs/final_mujoco_plots.png}
            }
            \caption{
                Comparison of EMaQ, BCQ, and BEAR on D4RL \citep{fu2020datasets} benchmark domains when using our proposed autoregressive $\muas$. For both BCQ and BEAR, from left to right the allowed deviation from $\muas$ increases. Horizontal green lines represent the reported performance of BEAR in the D4RL benchmark. Color-coding follows Figure \ref{fig:ablation}.
            }
            \label{fig:bigmujocoplots}
        \end{sidewaysfigure}

\end{document}